\def\ps@pprintTitle{%
\let\@oddhead\@empty
\let\@evenhead\@empty
\def\@oddfoot{}%
\let\@evenfoot\@oddfoot}
\newtheorem{definition}{Definition}
\newtheorem{lemma}{Lemma}
\newtheorem{theorem}{Theorem}
\newtheorem{require}{Requirement}
\newtheorem{arcdef}{ARC Definition}
\newcommand\pfun{\mathrel{\ooalign{\hfil$\mapstochar$\hfil\cr$\to$\cr}}}
\DeclareMathOperator*{\argmin}{arg\,min}
\newenvironment{brsm}{
  \bigl[ \begin{smallmatrix} }{%
  \end{smallmatrix} \bigr]}
\newcommand{\HIDE}[1]{}
\begin{document}

\begin{frontmatter}


  
\title{MADIL: An MDL-based Framework for Efficient Program Synthesis in the ARC Benchmark}


\author{Sébastien Ferré} 

\affiliation{organization={Univ Rennes, CNRS, Inria, IRISA},
            addressline={\\263 av. Général Leclerc}, 
            city={Rennes},
            postcode={35042}, 
            country={France}}
          \ead{ferre@irisa.fr}
          
\begin{abstract}
  Artificial Intelligence (AI) has achieved remarkable success in
  specialized tasks but struggles with efficient skill acquisition and
  generalization. The Abstraction and Reasoning Corpus (ARC) benchmark
  evaluates intelligence based on minimal training requirements. While
  Large Language Models (LLMs) have recently improved ARC performance,
  they rely on extensive pretraining and high computational costs. We
  introduce MADIL (MDL-based AI), a novel approach leveraging the
  Minimum Description Length (MDL) principle for efficient inductive
  learning. MADIL performs pattern-based decomposition, enabling
  structured generalization. While its performance (7\% at ArcPrize
  2024) remains below LLM-based methods, it offers greater efficiency
  and interpretability. This paper details MADIL’s methodology, its
  application to ARC, and experimental evaluations.
\end{abstract}



\begin{keyword}
  Artificial Intelligence \sep Inductive Learning \sep Program Synthesis \sep Minimum Description Length \sep Abstraction and Reasoning Corpus \sep Pattern-based Decomposition




\end{keyword}

\end{frontmatter}



\section{Introduction}
\label{intro}

Over the past decade, Artificial Intelligence (AI) has achieved
remarkable success in specialized tasks, often surpassing human
performance in domains such as image
recognition~\cite{KriSutHin2012nips} and board
games~\cite{Silver2016alphago}. However, despite these advances, AI
remains limited in its ability to generalize and adapt to novel tasks
with minimal training -- a hallmark of human intelligence. To
encourage progress beyond narrow task-specific
generalization~\cite{Goertzel2014}, F. Chollet proposed a new measure
of intelligence that prioritizes {\em skill-acquisition efficiency}
over {\em skill performance}~\cite{Chollet2020}. In this framework,
intelligence is defined by the amount of prior knowledge and
experience an agent requires to achieve competence across a diverse
range of tasks, rather than its peak performance in any single domain.

To empirically assess this notion of intelligence, Chollet introduced
the Abstraction and Reasoning Corpus (ARC, aka. ARC-AGI), a benchmark
designed as a psychometric test for evaluating and comparing human and
machine intelligence. ARC consists of a collection of tasks that
require learning transformation rules for colored grids based on very
limited input-output examples (3.3 on
average). Figures~\ref{fig:example:other} and \ref{fig:example}
illustrate two sample ARC tasks, with the second serving as a running
example throughout this paper.

\begin{figure}[t]
  \centering
  \includegraphics[width=0.8\textwidth]{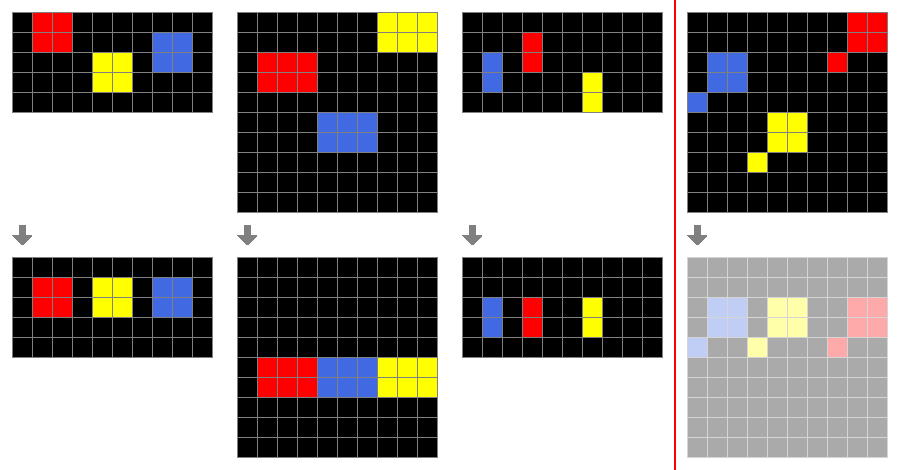}
  \caption{Task 1caeab9d (inputs at the top, outputs at the bottom, test on the right)}
  \label{fig:example:other}
\end{figure}

The Abstraction and Reasoning Corpus (ARC) presents a significant
challenge for AI systems. While humans can solve over 80\% of the
tasks~\cite{Johnson2021}, AI performance has lagged far behind. The
winner of the Kaggle 2020
competition\footnote{\url{https://www.kaggle.com/c/abstraction-and-reasoning-challenge}}
managed to solve only 20\% of the tasks, relying heavily on hard-coded
primitives and brute-force search.
%
A major breakthrough came with the use of Large Language Models (LLMs)
for predicting missing grids, an approach pioneered by J. Cole at
MindsAI. This innovation led to rapid progress, reaching 30\% accuracy
at ARCathon
2023\footnote{\url{https://lab42.global/past-challenges/arcathon-2022/}}
and 55\% at ArcPrize 2024\footnote{
  \url{https://arcprize.org/}}. By the end of
2024, OpenAI's o3 reasoning LLM achieved human-level performance,
albeit at an extremely high computational cost -- running the
high-compute version required thousands of dollars per task at
inference time.

Despite these advances, LLM-based methods rely on extensive
pre-training with millions of synthesized tasks and require
substantial computational resources for fine-tuning and reasoning
during inference. As a result, the challenge of efficient skill
acquisition remains largely unresolved. In March 2025, a new version
of the benchmark,
ARC-AGI-2\footnote{\url{https://arcprize.org/blog/announcing-arc-agi-2-and-arc-prize-2025}},
was introduced to push AI research further. While still simple to most
human solvers, ARC-AGI-2 presents significantly greater difficulty for
LLMs -- demonstrated by o3’s success rate dropping to just a few
percent.

In this paper, we present MADIL, an alternative approach to the ARC
benchmark. MADIL is a general framework for inductive learning from
small sets of input-output examples. While we focus on its application
to ARC, the method has also been successfully applied to
string-to-string transformation tasks, such as those in
FlashFill~\cite{FlashFill2013}.
MADIL, which stands for ``MDL-based AI,'' is founded on the Minimum
Description Length (MDL) principle -- a concept from information
theory that states: ``The model that best describes the data is the
one that compresses it the most''~\cite{Rissanen1978,Grunwald2019}. In
essence, MADIL searches for task models that both explain the given
examples concisely and generalize well to unseen inputs.
At inference time, MADIL operates by pattern-based decomposition,
breaking down an input into meaningful subcomponents in a top-down
manner, then constructing the corresponding output bottom-up. During
learning, it identifies optimal decompositions for both inputs and
outputs and determines the transformations between corresponding
parts. The key advantage of this approach is that the MDL principle
guides the decomposition process -- favoring representations that
achieve greater compression -- while also simplifying the residual
part-to-part transformations, making them easier to learn.

Although MADIL's overall performance remains below state-of-the-art
methods -- improving from 2\% at ARCathon 2022 to 7\% at ArcPrize 2024
-- its MDL-based search is highly efficient, enabling the discovery of
complex models in under a minute (on a single CPU). Unlike brute-force
search approaches that perform a wide but shallow exploration, MADIL
conducts a narrow but deep search, with most solutions found early
along the first exploration path.
Compared to LLM-based approaches, MADIL does not require synthetic
task generation or data augmentation. Instead, it leverages Core
Knowledge priors, encoded as a set of primitives, patterns and
functions. While these primitives are domain-specific -- designed for
reasoning over colored grids -- many are broadly applicable beyond ARC
tasks. Examples include arithmetic and bitwise operations, geometric
transformations, and collection manipulations, making MADIL a more
structured and interpretable alternative to data-intensive deep
learning methods.

This paper significantly expands on our previous work on
MADIL~\cite{Fer2023dexa,Fer2024ida}, providing deeper explanations,
presenting a more advanced solution to ARC, and introducing several
novel contributions.
Section~\ref{arc} offers a quick overview of the ARC
benchmark. Section~\ref{related} discusses related work, covering
existing approaches to ARC and broader research in program
synthesis. Section~\ref{sec:overview} introduces our approach through
a concrete example task. Section~\ref{madil} formalizes the general
theoretical framework of MADIL and demonstrates its application to
ARC. Section~\ref{algos} describes the key algorithms and other
practical aspects.
Section~\ref{sec:advanced} presents three recent enhancements to
MADIL: (1) the integration of collection management, (2) the use of
dependent patterns, and (3) the application of Monte Carlo Tree Search
as an improvement over greedy search. Section~\ref{eval} reports
experimental results evaluating MADIL’s performance, efficiency, and
limitations. Finally, Section~\ref{conclu} summarizes our findings and
outlines directions for future research.


\section{Abstraction and Reasoning Corpus (ARC)} 
\label{arc}

\begin{figure}[t]
  \centering
  \includegraphics[width=0.8\textwidth]{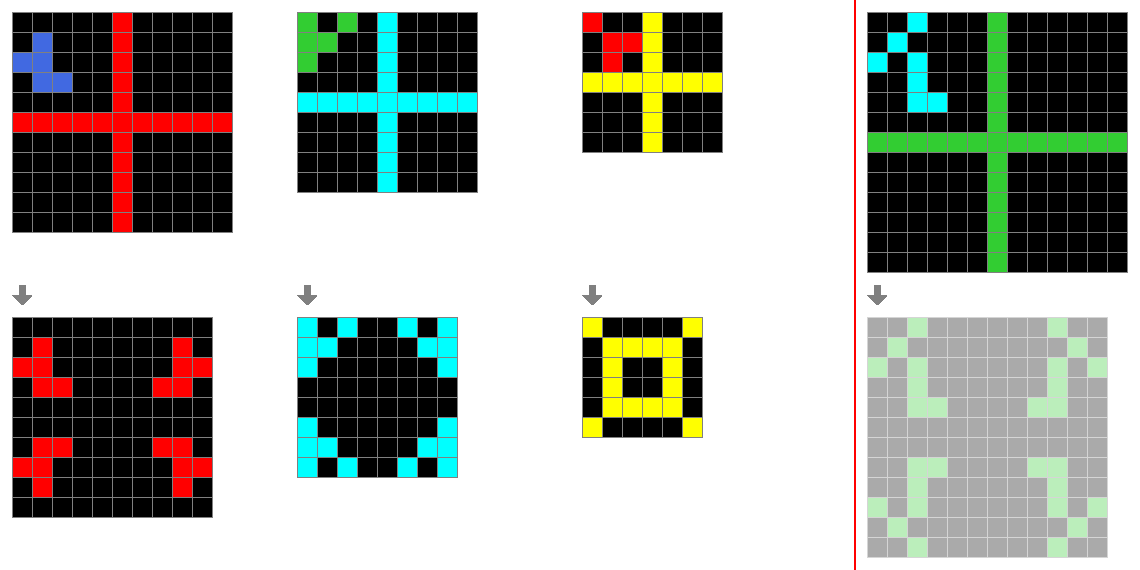}
  \caption{Task 47c1f68c (inputs at the top, outputs at the bottom, test on the right)}
  \label{fig:example}
\end{figure}

ARC is a collection of tasks\footnote{Data and testing interface at
  \url{https://github.com/fchollet/ARC}}, where each task is made of
training examples (3.3 on average) and test examples (1 in
general). Each example is made of an input grid and an output
grid. Each grid is a 2D array (with size up to 30x30) filled with
integers coding for colors (there are 10 distinct colors). For a given
task, the size of grids can vary from one example to another, and
between the input and the output.
Each task is a machine learning problem, whose goal is to learn a
model that can generate the output grid from the input grid, and so
from a few training examples only. Prediction is successful only if
the predicted output grid is {\em strictly equal} to the expected grid
for {\em all} test examples, there is no partial success. However,
three attempts\footnote{The number of attempts was lowered to 2 in
  ArcPrize.} are allowed for each test example to compensate for
potential ambiguities in the training examples.
Figure~\ref{fig:example} shows an ARC task that is used as a running
example in the following. The grid to be predicted is the one at the
bottom right.

ARC is composed of 1000 tasks in total: 400 ``training
tasks''\footnote{The term ``training tasks'' may be misleading as
  their purpose is to train AI developers, not AI systems. Humans
  solve ARC tasks without training. \HIDE{Indeed, ARC tasks can be
    solved by humans without any ARC-specific training.}},
400 evaluation tasks (aka. public tasks), and 200 secret tasks for
independent evaluation. Among secret tasks, 100 form the so-called
private set that was used in the challenges from 2020 to 2024 (Kaggle,
ARCathon, and ArcPrize), and the rest form the so-called semi-private
set that was used to evaluate proprietary LLMs (see ArcPrize).
Developers should only look at the training tasks, not at the
evaluation tasks. The latter should only be used to evaluate the broad
generalization capability of the developed systems.

\section{Related Work}
\label{related}

Earlier approaches to ARC define a DSL (Domain-Specific Language) of
programs -- based on function composition -- that transform an input
grid into an output grid, and search for a program that is correct on
the training examples. The differences mostly lie in the primitive
functions (prior knowledge) and in the search strategy. \HIDE{It is
  tempting to define more and more functions like the Kaggle winner
  did, hence more prior knowledge, but this means a less intelligent
  system according to Chollet's measure.}
Brute-force search led to some success -- it is the approach taken by
the winners at Kaggle'20 (Icecuber) and ARCathon'22 (Michael Hodel)
competitions -- but this cannot be a solution to ARC and AGI. To guide
the search in the huge program space, other approaches use grammatical
evolution~\cite{Fischer2020}, neural networks~\cite{Alford2021cnta,Ouellette2024},
search tree pruning with hashing and Tabu list~\cite{Xu2022}, or
stochastic search trained on solved tasks~\cite{Ainooson2023}.
A difficulty is that the output grids are generally only used to score
a candidate program so that the search is kind of
blind. Ouellette~\cite{Ouellette2024} and Alford~\cite{Alford2021cnta}
improve this with a neural-guided search to take the ouput grid into
account in the choice of the search steps,
and Xu~\cite{Xu2022} compares the in-progress generated grid to the
expected grid. However, this assumes that output grids are comparable
to input grids, which is not true for all tasks. Function-based DSL
approaches have a scaling issue because the search space increases
exponentially with the number of primitive functions. For this reason,
search depth is often bounded by~3 or~4. Ainooson~\cite{Ainooson2023}
alleviates this difficulty by defining high-level functions that
embody specialized search strategies.
Most approaches based on DSL design and search scored under 10\% on
the public and private sets, with the notable exception of Icecuber's
approach that scored 20.6\% at Kaggle'20. A key ingredient of its
success seems to be the decomposition of the output grids by stacking
layers taken from a large collection of pieces computed from the input
grids.

Later approaches use Large Language Models (LLM) to generate output
grids or transformation programs, achieving a major progress by
scoring up to 30\% at ARCathon'23, and 55\% at ArcPrize'24. Actually,
general-purpose LLMs such as GPT-4o perform poorly on ARC tasks. The
approach pioneered by MindsAI consists in synthesizing a very large
set of ARC-like tasks, and to train a specialized LLM on them. Another
essential ingredient is Test-Time Fine-Tuning (TTFT). It consists in
augmenting the few examples of an ARC task into thousands of examples,
and then fine-tuning the LLM to the task before generating many
candidate output grids, and voting for the most promising ones.
Greenblatt~\cite{Greenblatt2024} and Berman~\cite{Berman2024} adopted
an inductive rather than transductive approach. Instead of asking the
LLM to directly generate output grids for a test input grid, they ask
the LLM to reason on the task examples in order to generate thousands
of candidate transformation programs (e.g., as Python code). They evaluate
those programs by evaluating them on the examples, and they adopt an
evolutionary approach where they ask the LLM to revise the more
promising programs into successive generations.
Despite the objective success of LLM-based approaches on ARC, there
are questions about the actual progress in terms of AGI. First, the
LLMs have been heavily trained on millions of ARC-like tasks. There is
a risk that, for some private tasks, there are synthetic tasks that are
very similar so that the LLM would only need to ``retrieve'' the
solution rather than ``reason'' on a new task. For recall, ARC was
designed to test out-of-distribution inference. Moreover, humans can
solve ARC tasks without prior exposure to them, solving them from core
knowledge priors only. Second, test-time compute is huge because of
example augmentation, fine-tuning and the generation of thousands of
candidates. Efficiency was identified as an important factor of
intelligence~\cite{Chollet2019}, the opposite of brute-force search,
and massive LLM-based generation can be assimilated to a form of
brute-force search.

Johnson {\em et al.}~\cite{Johnson2021} report on a psychological
study of ARC. It reveals that humans use object-centric mental
representations to solve ARC tasks. This is in contrast with existing
solutions that are based on grid transformations. Interestingly, the
tasks that are found the most difficult by humans are those based on
logics (e.g., an exclusive-or between grids) and symmetries (e.g.,
rotation), precisely those most easily solved by transformation-based
approaches.
The study exhibits two challenges: (1) the need for a large set of
primitives, especially about geometry; (2) the difficulty to identify
objects, which can be only visible in part due to overlap or
occlusion.
A valuable resource is LARC, for Language-annotated
ARC~\cite{Acquaviva2022}, collected by crowd-sourcing. It provides for
most training tasks one or several {\em natural programs}.
They are natural in that they are short natural language texts
produced by humans trying to solve ARC tasks. They are programs in
that they were proved to be effective by involving two separated
participants: a describer that produces the text given the training
examples only, and a builder that generates the output grid given the
produced text and the test input grid only.
Those natural programs confirm the object-centric and declarative
nature of human representations.

Beyond the ARC benchmark, a number of work has been done in the domain
of {\em program synthesis}, which is also known as program induction
or programming by examples (PbE)~\cite{Lieberman2001pbe}.  An early
approach is Inductive Logic Programming (ILP)~\cite{MugRae1994}, where
target predicates are learned from symbolic representations.
A more recent success story in program synthesis is
FlashFill~\cite{FlashFill2013}. It generates string-to-string programs
from a few examples, and has been deployed in Microsoft Excel to
automatically fill columns after the user has provided values for a
few rows. It relies on the definition of a DSL and clever
datastructures for computing the set of all programs that are
compatible with the
examples. FlashMeta~\cite{Polozov2015flashmeta}. is a framework that
generalizes FlashFill, facilitating application to other DSLs A key
ingredient of FlashMeta is {\em witness functions} that capture the
inverse semantics of DSL primitives, and hence enable to take into
account example outputs -- to some extent -- in the search for
programs. Scrybe~\cite{Mulleners2023} also features a kind of inverse
semantics where examples are propagated backward from the result of a
function to its arguments. However, it seems limited to structured
data (lists of integers) and combinator programs (mostly list
filtering and permutation). Rule et al~\cite{Rule2024} also consider
list-to-list programs but searches a space of {\em metaprograms}
rather than the space of programs directly. A metaprogram transforms a
set of examples into a program, in a few steps. It uses orders of
magnitude less search, and reaches performance close to humans.
Dreamcoder~\cite{Ellis2021dreamcoder} alternates a {\em wake} phase
that uses a neurally guided search to solve tasks, and a {\em sleep}
phase that extends a library of abstractions to compress programs
found during the wake phase. In some tasks, Bayesian program learning
was shown to outperform deep learning, e.g. for parsing and generating
handwritten world's alphabets~\cite{LakSalTen2015}.

\section{Overview of the Proposed Approach}
\label{sec:overview}


In this section we give an informal presentation of our approach
before diving into the technical details in the next sections.  We
base this presentation on a task taken from the training set, task
47c1f68c. Figure~\ref{fig:example} shows the three demonstration
examples on the left, and the test example on the right; input grids
are at the top while output grids are at the bottom. We have chosen a
relatively complex task that cannot be easily solved by brute-force
search in a DSL search space, and in which both outputs and inputs
exhibit some structure.

\begin{figure}[tt]
  \centering
  \includegraphics[width=\textwidth]{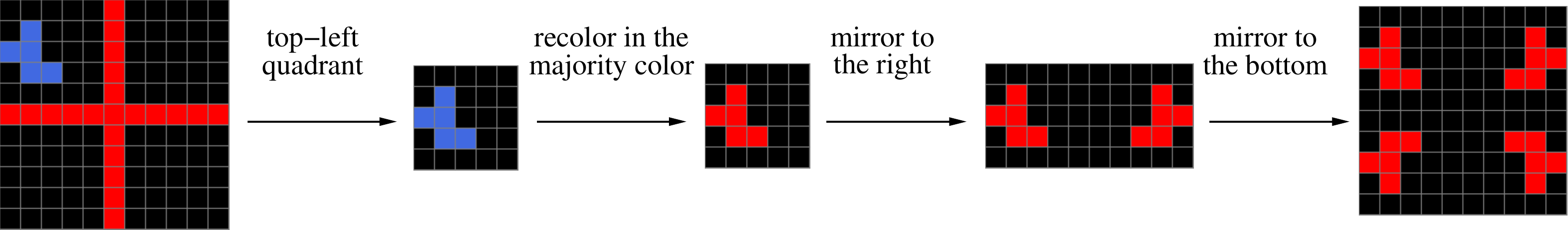}
  \caption{Solution with a sequence of transformations.}
  \label{fig:example:sequence}
\end{figure}

Existing approaches based on program synthesis typically define
programs as sequences of transformations, from inputs to
outputs. Figure~\ref{fig:example:sequence} shows such a sequence of
transformations solving the running task, and its application to the
first demonstration example. The sequence is made of 4 steps: crop the
top-left quadrant, then recolor the non-black cells in the majority
color of the input grid, and finally mirror twice the result, to the
right and to the bottom.
Several observations can be made about the search for a solution
sequence of transformations:
\begin{itemize}
\item Among all the transformations that are applicable to a grid
  (input or intermediate), there is no reason to prefer one or another
  independently of the target output. A good transformation is a
  transformation that is useful to the generation of the output.
\item In general, there is no clear way to assess the usefulness of an
  intermediate grid w.r.t. the target output. A good intermediate grid
  may have a different size, different colors, or contain different
  shapes.
\item Most transformations are not invertible so that in most
  approaches sequences are generated and evaluated from the input to
  the output only.
\end{itemize}
For those reasons, full sequences are typically generated before being
evaluated by comparing the predicted outputs with the expected
ones. This makes enumerative search exponential and therefore limited
to short sequences. More advanced approaches use a trained model to
predict promising transformations for the next step given the input,
output, and previous steps.

\begin{figure}[t]
  \centering
  \includegraphics[width=\textwidth]{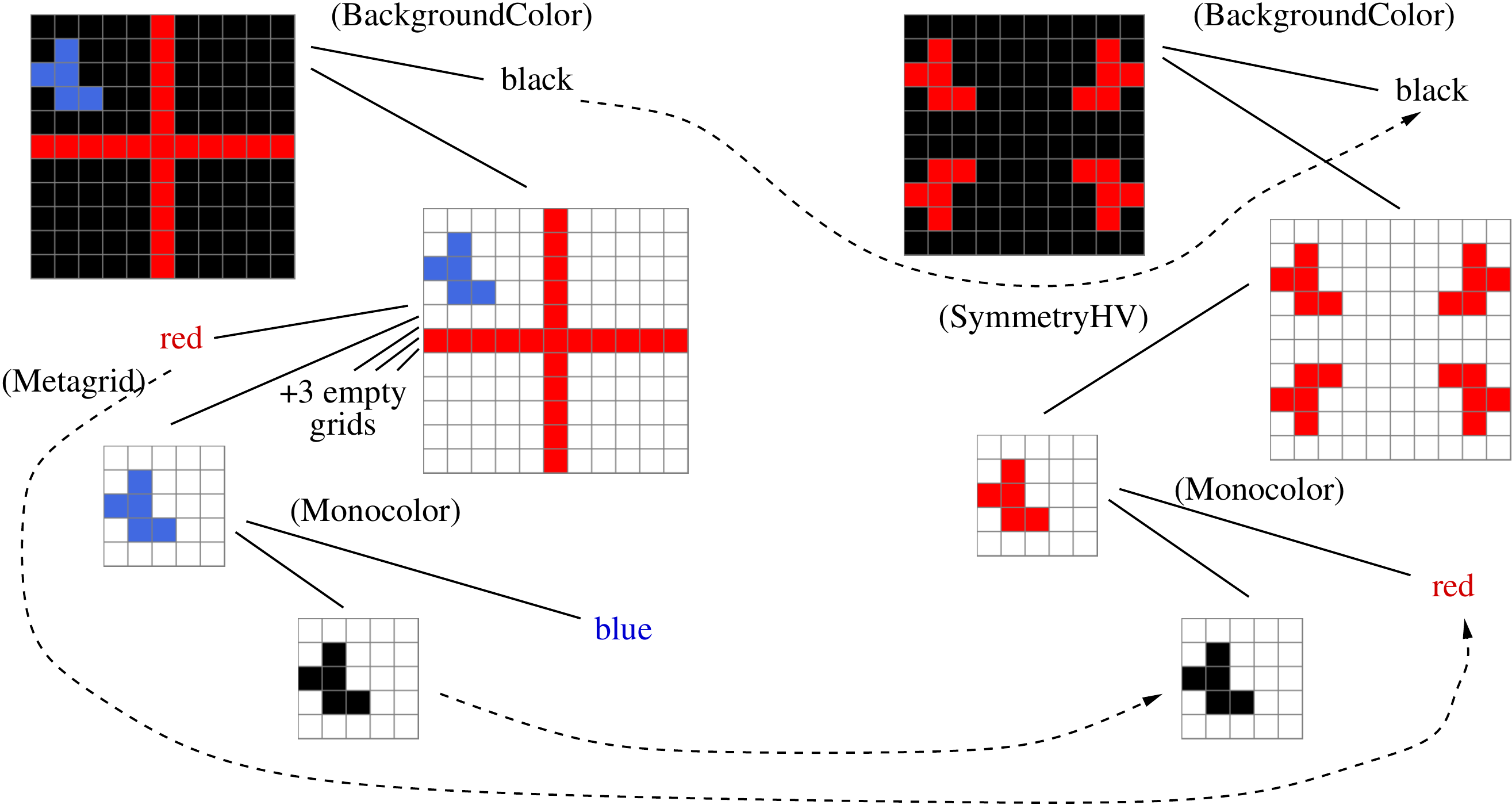}
  \caption{Solution with a decomposition/composition of the grids (solid lines), and mappings from input parts to output parts (dashed arrows).}
  \label{fig:example:decomposition}
\end{figure}

\begin{figure}[t]
{\small
\begin{verbatim}
BgColor(bgcolor: ?,
    Metagrid(sepcolor: ?,
        [ [ Monocolor(color: ?, mask: ?), Empty(size2: ?) ],
          [ Empty(size3: ?),              Empty(size4: ?) ] ]))
=>
BgColor(bgcolor = bgcolor,
    SymmetryHV(
        Monocolor(color = sepcolor, mask = mask)))
\end{verbatim}
}
  \caption{A MADIL program that is a solution to the running task.}
  \label{fig:example:model}
\end{figure}

In the MADIL approach, a program first decomposes the input grid into
different parts, and then compose the output grid from those parts.
Figure~\ref{fig:example:decomposition} shows the decompose-compose
process for the running task, illustrated on the first example. The
input grid is decomposed into a background color (here black), and the
grid contents, which is decomposed as a ``metagrid'' of 4 subgrids and a
separator color (here red). The top-left subgrid is further decomposed
into its single color (here blue), and the contents mask. From there,
the output grid is composed from the same background color as the
input (here black), and a grid contents that results from a symmetric
unfolding of a smaller grid that results from the coloring of the
input mask with the input seperator color (here red). Note that this
program generalizes correctly to other background color than
``black'', although there is no example that requires it.

Each decomposition/composition is called a {\em pattern}, which can be
run in both directions: from whole to parts (decomposition), and from
parts to whole (composition). Composition must be deterministic but
decomposition may be underterministic: there may be different ways to
decompose a grid along a pattern, e.g. when a grid matches different
symmetries.
Figure~\ref{fig:example:model} shows the MADIL program that is a
solution to the running task. We actually talk about a {\em task
  model} as it is made of two grid models, where each grid model is a
composition of patterns. In addition, the output model can refer to
input parts (e.g., {\tt sepcolor}) -- and apply functions -- in order
to specify output parts from input parts.

Several observations can be made about the search for a
decompose-compose program:
\begin{itemize}
\item Among all the patterns that are applicable to input grids, there
  is a preference for those that compress the grids more,
  independently of the output grid.
\item The compression rate can be defined by relying on information
  theory, and overfitting can be avoided by relying on two-parts MDL.
\item Patterns are invertible so that the program can be grown from
  both sides, searching for good decompositions of both the input and
  the output grids.
\end{itemize}
The search for a MADIL program can therefore be split into three
components: a compressive decomposition of the input grid, a
compressive decomposition of the output grid, and mappings from the
input parts to the output parts. The latter can be done in the
classical way, searching for transformation sequences from input parts
to output parts. However, shorter sequences can be expected here
because the decompositions contribute to the whole transformation. In
the running task, it can be observed that empty sequences are enough
as each output part is found among the input parts.

\section{Theory}
\label{madil}

In this section we lay the theoretical foundations of our approach. We
first define all key notions, from values to task models, going
through patterns and functions, which are the building blocks of
models (Section~\ref{definitions}). We then define the description
lengths of models and descriptions, in order to apply the MDL
principle. Finally we define the learning goal and the search space of
task models, and establish useful properties over this search space.
Those definitions are fully generic and can in principle be applied to
any kind of inputs and outputs. The notions that need to be
instantiated for a particular domain are values, patterns, and
functions. We provide their ARC definition as illustrative examples,
and as the basis for our experiments.


\subsection{Definitions: from Values to Task Models}
\label{definitions}


\subsubsection{Values}
\label{value}

Values are the basic ingredients of MADIL models. There are values for
the task inputs and outputs, and also for intermediate representations
in the transformation process from input to output.

\begin{definition}[value]
  A {\em value}~$v \in {\cal V}$ is any piece of information, represented using data types. ${\cal V}$ is called the {\em universe}.
\end{definition}

\begin{arcdef}
  For ARC, values are colored grids; integers for positions, sizes, or
  moves; colors; color maps; and {\em motifs} (symmetries, periodic color
  patterns, regular shapes). There are three subtypes of colored
  grids. {\em Sprites} are partially-colored grids, using an
  additional transparent color, in order to represent non-rectangular
  objects. {\em Masks} are non-colored sprites, like bitmaps, in order
  to represent shapes (color black is used for the cells belonging to
  the shape). There are also two types of composites values: vectors
  made of two integers, to represent positions, sizes, and moves; {\em
    objects} made of a position and a sprite.
\end{arcdef}

In order to measure description lengths for values, we also need {\em
  value distributions} from which values are drawn. They also serve to
constrain a value to belong to some subset of values. An example of
distribution is the uniform distribution over non-black colors.

\begin{definition}[value distribution]
  A {\em value distribution} $V$ is defined by its probability mass
  function (pmf) $f_V \in {\cal V} \rightarrow [0,1]$. Its support is
  written $R_V$, defined as $\{ v \in {\cal V} \mid f_V(v) > 0\}$. \HIDE{We
  note $\{v\}$, or simply $v$ for short, the singleton distribution
  where all probability mass is on~$v$.}  We also note $v \sim V$ to
  express the fact that value~$v$ is drawn from distribution~$V$,
  i.e. $f_V(v) > 0$.
\end{definition}

\begin{arcdef}
  We use uniform distributions over an interval for integers, and over
  lists of motifs for motifs. For colors, we also use a list of
  possible values but we distinguish between background colors where
  black is more likely, and object colors where black is less likely.
  For grids, we distinguish the three subtypes of grids, and we use
  integer distributions for the height and width of the grid, and a
  color distribution for the cell colors.
\end{arcdef}

In this work, we rather use description lengths than
probabilities. The optimal encoding theorem of Shannon provides a
direct relationship between the two.

\begin{definition}[value description length]
  Let $V$ be a value distribution, and $v \sim V$ a value drawn from
  this distribution. The {\em description length} of the value in
  bits, relative to the distribution, is defined as
  \[ L(v \mid V) := - \log_2 f_V(v) \]
\end{definition}

Here we do not bother about the actual encoding, only the coding length.
In many cases, a distribution will be defined through encoding rather
than through a pmf, along the equation $f_V(v) = 2^{- L_V(v)}$. It is
valid if the encoding behind the definition of $L(. \mid V)$ is
lossless, as it ensures that the sum of probability masses is at most
1.
For instance, a common encoding of positive integers is the Elias
encoding, where $L(n) = 2 \log_2 n + 1$.

\subsubsection{Functions and Patterns}
\label{function}
\label{pattern}

Computations in MADIL models are performed by functions and patterns.
We assume finite sets of functions and patterns to be used in models.

\begin{definition}[functions]
  We assume a collection~${\cal F}$, where each
  element~$f \in {\cal F}$ is a k-ary partial function on values:
  $f \in {\cal V}^k \pfun {\cal V}$. We note $v = f(v_1, \ldots, v_k)$
  to say that $v$ is the result of applying the function~$f$ to the
  $k$ argument values. We note $f(v_1,\ldots,v_k) = \bot$ when the
  function is undefined on its arguments.
  We note $D_f \subseteq {\cal V}^k$ the domain of definition of the function, and
  $R_f \subseteq {\cal V}$ its range.
\end{definition}

\begin{table}[t]
  \centering
  \begin{tabular}{|p{0.9\columnwidth}|}
    \hline
    {\bf Arithmetics}:
    addition and substraction of a small constant ($0..3$);
    product and division by a small constant ($2..3$);
    vectorized versions of the previous functions; 
    integer components of a vector;
    transposition of a vector. \\
    \hline
    {\bf Geometry}:
    size and area of a grid;
    number of colors of a grid;
    majority and minority colors of a grid;
    halves and quadrants of a grid;
    extracting the content of a sprite as an object;
    mask of a sprite;
    complement of a mask (logical not);
    compressing a grid by removing repeated rows/columns;
    applying symmetries to grids (combining rotations and reflections);
    completing a grid along a symmetry;
    replacing each cell of some color by the grid itself (self-compose). \\
    \hline
    {\bf Objects}:
    position and sprite of an object;
    extremal and median positions of an object along each axis (e.g., top and bottom, middle);
    border, interior and various neighborhoods of objects. \\
    \hline
  \end{tabular}
  \caption{Functions by domain}
  \label{tab:arc:functions}
\end{table}

\begin{arcdef}
  Table~\ref{tab:arc:functions} lists the available ARC functions by
  domain (arithmetics, geometry, and objects). In practice, all
  functions are used with one variable argument, and other arguments
  are set to constants (e.g., small integers, colors, symmetries).
\end{arcdef}

\begin{definition}[patterns]\label{def:patterns}
  We assume a collection~${\cal P}$, where each
  element~$P \in {\cal P}$ is a k-ary {\em pattern}. A pattern is a
  distribution on tuples of values
  $(v,v_1,\ldots,v_k) \in {\cal V}^{k+1}$, establishing a relationship
  between a value~$v$ and its decomposition into $k$ values
  $(v_1, \ldots, v_k)$ such that $v$ is unique given
  $(v_1, \dots, v_k)$ but not necessarily the reverse:
  \[ |\{ v \mid (v,v_1,\ldots,v_k) \sim P \}| \leq 1 \textrm{, for all } (v_1,\ldots,v_k) \in {\cal V}^k \]
  We define the {\em domain} of a pattern as
  $D_P := \{ (v_1,\ldots,v_k) \in {\cal V}^k \mid \exists v: (v,v_1,\ldots,v_k) \sim P \}$,
  and its {\em range} as
  $R_P := \{ v \in {\cal V} \mid \exists v_1,\ldots,v_k: (v,v_1,\ldots,v_k) \sim P\}$.
\end{definition}

From parts to whole, a pattern can be seen as a k-ary partial function
$v = P(v_1,\ldots,v_k)$. This implies that the parts contain all the
necessary information to reconstruct the whole.

From whole to parts, a pattern can be seen as a non-deterministic
partial function $(v_1,\ldots,v_k) \in P^{-1}(v)$, generating various
decompositions of the whole value into parts.

\begin{table}[t]
\centering
\begin{tabular}{|l|l|}
  \hline
  type & patterns \\
  \hline
  \hline
  {\it Grid} & {\bf BgColor}(color$:$ {\it Color},\ contents$:$ {\it Sprite}), \\
       & {\bf Monocolor}(color$:$ {\it Color}, mask$:$ {\it Mask}), \\
       & {\bf Motif}(motif$:$ {\it Motif}, core$:$ {\it Grid}, pure$:$ {\it Grid}, noise$:$ {\it Sprite}) \\
  \hline
  {\it Sprite} & {\bf IsFull}(grid$:$ {\it Grid}), {\bf Empty}(size$:$ {\it Vec}), \\
       & {\bf Monocolor}(color$:$ {\it Color}, mask$:$ {\it Mask}), \\
       & {\bf Motif}(motif$:$ {\it Motif}, core$:$ {\it Grid}, pure$:$ {\it Grid}, noise$:$ {\it Sprite}) \\
  \hline
  {\it Mask} & {\bf Empty}(size$:$ {\it Vec}), {\bf Full}(size$:$ {\it Vec}), \\
       & {\bf Point}(), {\bf Segment}(len$:$ {\it Int},\ dir$:$ {\it Vec}), \\
       & {\bf Motif}(motif$:$ {\it Motif}, core$:$ {\it Grid}, pure$:$ {\it Grid}, noise$:$ {\it Sprite}) \\
  \hline
  {\it Vector} & {\bf Vec}(i$:$ {\it Int},\ j$:$ {\it Int}), {\bf Square}(side$:$ {\it Int}) \\
  \hline
  {\it ColorMap} & {\bf Swap}(c1$:$ {\it Color},\ c2$:$ {\it Color}), {\bf Replace}(c1$:$ {\it Color},\ c2$:$ {\it Color}) \\
  \hline
\end{tabular}
\caption{Patterns by type}
\label{tab:arc:patterns}
\end{table}

\begin{arcdef}
  Table~\ref{tab:arc:patterns} lists the ARC patterns per type of the
  decomposed value, distinguishing the three types of grids. On full
  grids, ${\bf BgColor}$ decomposes the grid into a background color,
  and the rest as a sprite; ${\bf Monocolor}$ decomposes it into a
  color and a mask when a single color is present; and ${\bf Motif}$
  decomposes it according to some motif, e.g. some symmetry,
  identifying a core subgrid, and separating the pure grid following
  the motif, and some noise, possibly empty. A sprite can be
  recognized as a full grid $({\bf IsFull})$, as an empty grid with no
  colored cell $({\bf Empty})$, as having a single color
  $({\bf Monocolor})$, or as matching some motif. Similarly for masks
  plus points and segments. Points are 1x1 masks, and segments are
  decomposed into a direction (e.g., diagonal or vertical) and a
  length. Vectors can be decomposed into two integers $({\bf Vec})$,
  or recognized as square vectors where the two components are equal
  $({\bf Square})$. There are two patterns for color maps: when two
  colors replace each other $({\bf Swap})$, when a color is replaced
  by another one $({\bf Replace})$.
\end{arcdef}


\subsubsection{Expressions}
\label{expression}

Let ${\cal X}$ be a set of variables. Variables are used to identify
parts in the descriptions and models defined below. Expressions are
defined in the classical way as a combination of values, variables,
and function applications.

\begin{definition}[expression]
  An {\em expression}~$e \in {\cal E}$ is recursively defined as one of:
  \begin{itemize}
  \item $v$: a constant value from ${\cal V}$;
  \item $x$: a variable from ${\cal X}$;
  \item $f(e_1,\ldots,e_k)$: the application of a function~$f \in {\cal F}$ to $k$ arguments specified by sub-expressions.
  \end{itemize}
\end{definition}

An example of expression is $add(size(x), (1,1))$, which increase the
size of a grid~$x$ by 1 on each axis. Expressions are evaluated in the
classical way, relying on an environment to provide values for the
variables.

\begin{definition}[environment]
  An environment $\sigma \in {\cal X} \pfun {\cal V}$ is a
  partial mapping from variables to values. When a variable is
  undefined, we write $\sigma(x) = \bot$.
\end{definition}

An example of environment is a mapping
$\{ x \mapsto \begin{brsm}1 & 1 \\ 1 & 0\end{brsm}, y \mapsto 3\}$, which maps $x$ to a 2x2
grid, and $y$ to an integer.

\begin{definition}
  The evaluation~${\it eval}(e,\sigma)$ of an expression~$e$ on
  an environment~$\sigma$ returns a value or $\bot$. It is
  recursively defined as:
  \begin{itemize}
  \item ${\it eval}(v, \sigma) := v$;
  \item ${\it eval}(x, \sigma) := \sigma(x)$\\
    Note that $x$ may be undefined in $\sigma$;
  \item ${\it eval}(f(e_1,\ldots,e_k), \sigma) := f({\it eval}(e_1,\sigma), \ldots, {\it eval}(e_k, \sigma))$\\
    Note that $f$ may be undefined on its arguments. 
  \end{itemize}
\end{definition}

The evaluation of the expression $add(size(x), (1,1))$ on the above
environment returns the vector~$(3,3)$.

\subsubsection{Descriptions}
\label{description}

A description is a representation of the cascading decomposition of a
value, identifying each part with a variable.

\begin{definition}[description]
  A {\em description} $d \in {\cal D}$ of a value~$v$ is recursively defined
  as one of, where $x \in {\cal X}$ is a variable:
  \begin{itemize}
  \item $x: v$: a value (atomic description);
  \item $x: v = P(d_1,\ldots,d_k)$: a pattern-based decomposition of~$v$
    into parts~$(v_1,\ldots,v_k)$, i.e.
    $(v,v_1,\ldots,v_k) \sim P$, where each value~$v_i$ is the value
    described by~$d_i \in {\cal D}$ (composite description).
  \end{itemize}
\end{definition}

An example of description is
\[ \begin{array}{l}
     x: \begin{brsm}2 & 2 \\ 0 & 2\end{brsm} = \\
     \hspace*{1cm} BgColor(x_1: 0, x_2: \begin{brsm}2 & 2 \\ & 2\end{brsm} = \\
     \hspace*{2cm} Monocolor(x_{21}: 2, x_{22}: \begin{brsm}0 & 0 \\ & 0\end{brsm}))
   \end{array} \] where a 2x2 grid is decomposed into a background
 color (0 = black), another color (2 = red), and a mask ($\begin{brsm}0 & 0 \\ & 0\end{brsm}$).

The value described by~$d_i \in D$ is written~$v_i$ when there is no
ambiguity, and ${\it value}(d_i)$ otherwise. The root variable
of~$d_i$ is written~$x_i$ when there is no ambiguity, and
${\it var}(d_i)$ otherwise. The set of variables in a description~$d$
is written~$X_d$.
The {\em $x$-factor} of a description~$d$ is the subdescription rooted
at~$x$, noted $d.x$; and the {\em $x$-context} is the description in
which the subdescription at~$x$ is reduced to an atomic value, noted
$d.\overline{x}$.

A description provides an environment, mapping each part variable to
the associated value.  Such an environment provides access to the
whole value, to the atomic parts, and to every other intermediate values
in the description.

\begin{definition}
  Let $d$ be a description. It defines an environment~$\sigma_d$ over
  its variables~$x \in X_d$ s.t. $\sigma_d(x) = {\it value}(d.x)$.
\end{definition}

The environment defined by the above description is the mapping \[ \{ x \mapsto \begin{brsm}2 & 2 \\ 0 & 2\end{brsm}, x_1 \mapsto 0, x_2 \mapsto \begin{brsm}2 & 2 \\  & 2\end{brsm}, x_{21} \mapsto 2, x_{22} \mapsto \begin{brsm}0 & 0 \\  & 0\end{brsm} \}. \]

\subsubsection{Models}
\label{model}

Models are abstractions of descriptions, replacing some values by
unknowns and expressions.

\begin{definition}[model]
  A {\em model}~$m \in {\cal M}$ is recursively defined as one of:
  \begin{itemize}
  \item $x:\ ?$: an {\em unknown};
  \item $x: e$: an expression~$e \in {\cal E}$ that defines the value of~$x$;
  \item $x: P(m_1,\ldots,m_k)$: a pattern-based decomposition of~$x$
    into $k$ variables~$x_1,\ldots,x_k$, where each $x_i$ is modelled
    by~$m_i$.
  \end{itemize}
  We note $X_m$ the set of variables defined in a model~$m$, and the
  variable modelled by~$m_i$ is written~${\it var}(m_i)$ or
  simply~$x_i$ if there is no ambiguity.
\end{definition}

In the following, we sometimes omit the variables for the sake of
concision.
The {\em $x$-factor} of a model~$m$ is the submodel rooted at~$x$,
noted $m.x$; and the {\em $x$-context} is the model in which the
submodel at~$x$ is reduced to an unknown, noted $m.\overline{x}$.

Unlike expressions which have a single value for a given environment,
models cannot be evaluated in a deterministic way because a model
often generates many descriptions.

\begin{definition}\label{def:model:descr}
  Let $m \in {\cal M}$ be a model, $\sigma$ be an environment,
  and $d \in {\cal D}$ be a description. We say that $m$ generates $d$
  in the environment~$\sigma$, or equivalently that $d$ belongs
  to $m$ in~$\sigma$, and we note $d \in m[\sigma]$, iff the
  following statements are satisfied:
  \begin{itemize}
  \item if $m$ is $x:\ ?$, then $d$ is any atomic description $x: v$;
  \item if $m$ is $x: e$, then $d$ is the atomic description $x: v$ s.t. $v = {\it eval}(e,\sigma) \neq \bot$;
  \item if $m$ is $x: P(m_1,\ldots,m_k)$, then $d$ is the composite
    description $x: v = P(d_1,\ldots,d_k)$, s.t.
    $(v,v_1,\ldots,v_k) \sim P$, and for all $i \in 1..k$,
    $d_i \in m_i[\sigma]$.
  \end{itemize}
\end{definition}

Therefore $m[\sigma]$ denotes the set of generated
descriptions. By extension, it also denotes the set of generated
values $\{ {\it value}(d) \mid d \in m[\sigma] \}$.
For example, the model $m = x: Vec(x_1: add(x,1), x_2:\ ?)$ on the
environment $\sigma = \{x \mapsto 1\}$, generates the descriptions
$x: Vec(x_1: 2, x_2: 0)$, $x: Vec(x_1: 2, x_2: 1)$,
$x: Vec(x_1: 2, x_2: 2)$, etc.

The generation-relationship between descriptions and models can be
factorized on any variable, splitting them into factor and context.

\begin{lemma}\label{lemma:descr}
  Let $m$ be a model, $\sigma$ be an environment, $d$ be
  a description, and $x$ be a variable in~$m$ and $d$.
  \[ d \in m[\sigma] \iff d.x \in m.x[\sigma] \land d.\overline{x} \in m.\overline{x}[\sigma] \]
\end{lemma}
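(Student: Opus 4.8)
The plan is to prove the biconditional by structural induction on the model~$m$, case-splitting on whether the variable~$x$ is the root of~$m$ or lies strictly inside one of its subtrees; since the factor $\cdot.x$ and context $\cdot.\overline{x}$ act levelwise, this descent follows the unique path from the root down to~$x$. Before starting the induction, I would isolate one auxiliary fact that is used in both directions: contracting a subtree to its value does not change the value described, ${\it value}(d.\overline{x}) = {\it value}(d)$. This is a consequence of the parts-to-whole determinism built into Definition~\ref{def:patterns} — the whole is a function of its parts — so replacing the subtree at~$x$ by the atomic description $x: {\it value}(d.x)$ leaves the value at~$x$ untouched and hence, by induction up the path, leaves every ancestor value, in particular the root value, unchanged.

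In the base case $x = {\it var}(m)$ (and, for the relation to be non-vacuous, $x = {\it var}(d)$) one has $m.x = m$, $d.x = d$, $m.\overline{x} = (x:\ ?)$ and $d.\overline{x} = (x: {\it value}(d))$. The right-hand conjunct then holds unconditionally, because an unknown generates every atomic description carrying the root variable~$x$, so the biconditional degenerates to $d \in m[\sigma] \iff d \in m[\sigma]$.

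In the inductive case $x \neq {\it var}(m)$, Definition~\ref{def:model:descr} forces $m$ and~$d$ to share a common root pattern whenever either side is true, so I would write $m = y: P(m_1,\ldots,m_k)$ and $d = y: v = P(d_1,\ldots,d_k)$ with $x$ occurring in the same branch~$j$ (if the root patterns or variables disagree, both sides are false and there is nothing to prove). Then $m.x = m_j.x$ and $d.x = d_j.x$, while $m.\overline{x}$ and $d.\overline{x}$ are these same trees with the $j$-th branch replaced by $m_j.\overline{x}$ and $d_j.\overline{x}$. For the forward direction, unfolding $d \in m[\sigma]$ yields $(v,v_1,\ldots,v_k) \sim P$ and $d_i \in m_i[\sigma]$ for all~$i$; the induction hypothesis applied to $(m_j, d_j, x)$ splits $d_j \in m_j[\sigma]$ into $d_j.x \in m_j.x[\sigma]$ and $d_j.\overline{x} \in m_j.\overline{x}[\sigma]$, whence $d.x \in m.x[\sigma]$ is immediate and $d.\overline{x} \in m.\overline{x}[\sigma]$ follows by reassembling the composite, the auxiliary fact guaranteeing ${\it value}(d_j.\overline{x}) = v_j$ so the pattern tuple is unchanged. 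For the backward direction the same identities run in reverse: the context condition supplies $(v,\ldots,{\it value}(d_j.\overline{x}),\ldots) \sim P$ together with $d_i \in m_i[\sigma]$ for $i \neq j$, the factor condition supplies $d_j.x \in m_j.x[\sigma]$, the induction hypothesis recombines these into $d_j \in m_j[\sigma]$, and the auxiliary fact rewrites the tuple as $(v,v_1,\ldots,v_k) \sim P$, giving $d \in m[\sigma]$.

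The main obstacle I anticipate is the bookkeeping around value preservation rather than the induction itself: one must check that contracting a deep subtree leaves the \emph{entire} tuple $(v,v_1,\ldots,v_k)$ fixed at every level, so that the condition $(v,v_1,\ldots,v_k) \sim P$ reads identically before and after contraction. Pinning down this alignment, together with the implicit requirement that $d$ and~$m$ agree in shape down to~$x$ — which Definition~\ref{def:model:descr} enforces automatically by matching patterns and root variables — is where care is needed; everything else is a routine unfolding of the generation relation.
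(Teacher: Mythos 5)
Your proof is correct, but it takes a genuinely different route from the paper's. The paper argues directly and globally: it observes that $d \in m[\sigma]$ unfolds into a set of node-local constraints (one evaluation constraint $v = {\it eval}(e,\sigma)$ per expression, one tuple constraint $(v,v_1,\ldots,v_k) \sim P$ per pattern), in one-to-one correspondence with the common tree structure of $d$ and $m$, and that cutting both trees at $x$ partitions this constraint set into the constraints of the factor and those of the context; both implications then follow at once because a conjunction splits over a partition of its conjuncts. You instead run a structural induction along the path from the root to $x$, with an explicit base case ($x$ at the root, where the context degenerates to an unknown accepting any atomic description), an explicit mismatch case (both sides false when structures or variables disagree), and an explicit auxiliary fact that contraction preserves values, so that the tuple constraint $(v,v_1,\ldots,v_k) \sim P$ reads identically before and after replacing the $j$-th subtree by $d_j.\overline{x}$. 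The two arguments rest on the same insight, but yours makes rigorous exactly the points the paper's partition argument leaves implicit: that the trees must agree in shape and variables down to $x$ for either side to hold, and that the constraint at the parent of the contracted node is unchanged. The only quibble is your attribution of the value-preservation fact to parts-to-whole determinism of patterns: it is more simply a consequence of the definition of the context $d.\overline{x}$, which replaces the subdescription at $x$ by an atomic description carrying the very same value, all ancestor values being stored in $d$ rather than recomputed from parts; determinism is not needed.
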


\begin{proof}
  We first prove the forward implication.  The relationship
  $d \in m[\sigma]$ boils down to a set of constraints,
  $v = {\it eval}(e,\sigma)$ for each expression and
  $(v,v_1, \ldots, v_k) \sim P$ for each pattern, with a one-to-one
  correspondence between the tree structures of $d$ and $m$.
  Therefore, the split of trees at $x$ between the subtrees ($d.x$ and
  $m.x$) and the contexts ($d.\overline{x})$ and $m.\overline{x}$)
  entails a partition of the set of constraints in two parts, one for
  the subtrees and another for the contexts.
  As a consequence, the relationship holds for the subtrees
  ($d.x \in m.x[\sigma]$) and for the contexts
  ($d.\overline{x} \in m.\overline{x}[\sigma]$).

  Because the two subsets of constraints form a partition of the whole
  set of constraints, the argument can be reversed, hence proving the
  backward implication.
\end{proof}

\subsubsection{Tasks and Task Models}
\label{task}

We can now define tasks as sets of input-output pairs, and task models
that act as solutions to a task. We assume from the domain a
distribution~$V^i$ of input values, and a distribution~$V^o$ of output
values. In ARC, those distributions are defined over 10-color grids
with size at most 30x30.

\begin{definition}[task]
  A {\em task} is a structure $T = \langle E, F \rangle$, where:
  $E, F \subseteq {\cal V} \times {\cal V}$ are two sets of {\em
    examples} $(v^i,v^o)$, pairs of input and output, s.t.
  $v^i \sim V^i$ and $v^o \sim V^o$.
  The working assumption is that
  there exists a program mapping all inputs to their output. The
  examples are split into two subsets: the training examples~$E$ and
  the test examples~$F$.
\end{definition}

An example task is shown in Figure~\ref{fig:example}. It has three
training examples and one test example.

\begin{definition}[task model]
  A {\em task model} is a pair
  $M = (m^i,m^o) \in {\cal M} \times {\cal M}$ s.t. all variables used
  in~$m^o$ are introduced in~$m^i$: $m^i$ is called the {\em input
    model}, and $m^o$ the {\em output model}.
\end{definition}

Figure~\ref{fig:example:model} shows an example of task model.
We also extend the definition of descriptions to examples, i.e. to
pairs of values.

\begin{definition}[example description]
  An {\em example description} is a pair of descriptions, one for the
  input and the other for the output:
  $D = (d^i,d^o) \in {\cal D} \times {\cal D}$.
\end{definition}

We now extend the generation-relationship to example descriptions and
task models. Note that the output description depends on the input
description. Indeed, the final objective is to be able to predict the
output from the input.

\begin{definition}
  A task model~$M = (m^i,m^o)$ generates an example
  description~$D = (d^i,d^o)$ iff the input model generates the input
  description, and the output model generates the output description
  using the input description as environment, i.e.
  \[ D \in M \iff d^i \in m^i[\emptyset], d^o \in m^o[\sigma_{d^i}]. \]

  A task model generates an example iff it generates a
  task description for those values.
  \[ (v^i,v^o) \in M \iff \exists D = (d^i,d^o) \in M: {\it value}(d^i) = v^i, {\it value}(d^o) = v^o \]
\end{definition}

\HIDE{
\begin{definition}[definite task model]
  A task model~$M = (m^i,m^o)$ is said {\em definite} if the output
  model contains no unknown, i.e. only patterns and expressions.
\end{definition}

In a definite model, the output model generates a single value, when
defined, given an environment. This implies that for a given input
description, there is a single output description, hence a single
output value. However, for a given input, there may be several input
descriptions, hence several output descriptions and values.
}

\subsection{Description Lengths}
\label{dl}

As said above, the MDL principle states that the best model is the
model that compresses the data the more. The data is here the set of
training examples. The a priori description length of the training
examples~$E$ of a task $T = \langle E, F \rangle$ can be defined as
follows.
\[ L(E) := \sum_{(v^i,v^o) \in E} L(v^i \mid V^i) + L(v^o \mid V^o) \]
Each value is encoded according to the value distributions of the
domain. Those value encodings are simply concatenated because the
examples are assumed independent, and no relationship is known between
the input and ouput values (this is what we want to learn).

\begin{arcdef}
  For ARC, the input and output value distributions are both defined
  as a distribution~$V_g$ of 10-color grids s.t. for any $h \times w$
  grid~$g$, we have
  \[ L(g \mid V_g) = L_{\mathbb{N}^*}(h) + L_{\mathbb{N}^*}(w) + hw
    \log_2 10. \] A grid is encoded by concatenating the codes for the
  height and width of the grid, and the codes for the color of each
  cell of the grid.
\end{arcdef}

According to two-part MDL, the information contained in examples can
be split between a task model and their description according to the
model.
\[ L(M, E) := L(M) + L(E \mid M) \]
%
The MDL principle tells that the best model~$M^*$ is the one that compresses
the more the data.
\[ M^* := \argmin_{M} L(M, E) \]

We now have to define the two parts: model and data description
lengths. We start with data.

\subsubsection{Encoding Examples}
\label{dl:data}

Encoding a set of examples amounts to encode each example.  Encoding
an example~$(v^i,v^o)$ according to a model~$M$ amounts to encode an
example description~$D$ of those values. A difficulty is that there
may be several descriptions for an example. Along the MDL principle,
we choose the most compressive description.

\[ D^*_M(v^i,v^o) := \argmin_{\substack{D \in M\\{\it value}(D) = (v^i,v^o)}} L(D \mid M) \]

We can now define the description length of data as follows.

\[ L(E \mid M) := \sum_{(v^i,v^o) \in E} L(D^*_M(v^i,v^o) \mid M) \]

Encoding an example description can be decomposed in two parts: the
input and the output.

\[ L(D \mid M) := L(d^i \mid V^i, m^i[\emptyset]) + L(d^o \mid V^o, m^o[\sigma_{d^i}]) \]

Therefore, it remains to define $L(d \mid V, m[\sigma])$, the
description length of a value drawn from a distribution~$V$ when
decomposed into a description~$d$ that belongs to the model~$m$
applied to the given environment~$\sigma$. This is done by
induction on the tree structure common to the description and the
model.
\begin{itemize}
\item if $d$ is $(x: v)$ and $m$ is $(x:\ ?)$, then $L(d \mid V, m[\sigma]) := L(v \mid V)$;
\item if $d$ is $(x: v)$ and $m$ is $(x: e)$, then $L(d \mid V, m[\sigma]) := 0$;
\item if $d$ is $(x: v = P(d_1,\ldots,d_k))$ and $m$ is $(x: P(m_1,\ldots,m_k))$, then
  \[ L(d \mid V, m[\sigma]) := \sum_{i=1}^k L(d_i \mid V_i,
    m_i[\sigma]), \] where $V_i$ is the value distribution of the
  part~$x_i$ of pattern~$P$ given that the whole value~$x$ is drawn
  from~$V$, and that the value of all parts~$x_{j<i}$ is known. This
  is to account for known constraints on the whole value, and
  dependencies between the parts of a pattern.
\end{itemize}

Encoding an unknown value is simply encoding the value, according to
its distribution. Encoding an expression value is not necessary
because the value can be computed from the expression and the provided
environment. Encoding a composite value can be reduced to encoding its
parts because the whole value can be computed deterministically from
the values of the parts.
The specification of $V_i$ comes from the chain decomposition of the
encoding of the row of parts~$d_1,\ldots,d_k$. It leads to the
following requirement about patterns.

\begin{require}[distributions of pattern parts]
  For every pattern~$P \in {\cal P}$, and every part position~$i$,
  define the distribution~$V_i = V_{P,i}(V,v_1,\ldots,v_{i-1})$, as a
  function of the whole value distribution~$V$, and the value of
  previous parts~$v_1,\ldots,v_{i-1}$.
\end{require}

\subsubsection{Encoding Models}
\label{dl:model}

Encoding a task model can be decomposed in two parts, the input model
and the output model.

\[ L(M) := L(m^i \mid V^i) + L(m^o \mid V^o, X_{m^i}) \]

Note that encoding the output model depends on the variables defined
by the input model because the output model can refer to them through
expressions. We therefore have to define $L(m \mid V, X)$, the
description length of a model given the distribution of the values to
be modeled, and a set of available variables (that can be used in
expressions). For the input model, $X = \emptyset$.

There is no standard way to encode models. We propose to follow the
idea that smaller models are prefered. We hence decompose the encoding
of a model into (1) the encoding of its size, (2) the encoding of its
abstract syntax given its size, and finally (3) the encoding of the
remaining elements: constant values and the choice of variables.

The size~$n$ of a model~$m$ is defined as the number of its symbols,
i.e. unknowns, patterns, functions, variables, and values.  For
example, the size of $?$ is 1; the size of $Vec(add(x,1), ?)$ is 5.
The encoding of this size is defined as $L_{\mathbb{N}^*}(n)$, the
Elias encoding of positive integers.


We can compute for each size~$n > 0$ and each value distribution~$V$ the
number of models $\#M(n,V)$ and the number of expressions $\#E(n,V)$
having that size (without actually enumerating them).
\begin{align}
\#M(1,V) &= 1 + \#E(1,V) \\ 
\#M(n>1,V) &= \#E(n,V) + \sum_{\substack{P \in {\cal P}\\ R_P \cap V \neq \emptyset}} \sum_{\substack{P \textrm{ has arity } k\\n_1 > 0, \ldots, n_k > 0\\n_1+\ldots+n_k = n-1}} \prod_{i=1}^k \#M(n_i,V_{P,i}(V)) \\
\#E(1,V) &= 2 \\ 
\#E(n>1,V) &= \sum_{\substack{f \in {\cal F}\\R_f \cap V \neq \emptyset}} \sum_{\substack{P \textrm{ has arity } k\\n_1 > 0, \ldots, n_k > 0\\n_1+\ldots+n_k = n-1}} \prod_{i=1}^k \#E(n_i, V_{f,i}(V))
\end{align}
Equation~(1) says that atomic models are the unknowns and the atomic
expressions. Equation~(2) says that a $(n>1)$-size model is either a
$n$-size expression or a pattern with a $V$-compatible range, and with
the remaining size~$(n-1)$ distributed over the $k$ parts. Equation~(3)
says that an atomic expression is either a constant value or a
variable. Equation~(4) says that a $(n>1)$-size expression is a function
defined with a $V$-compatible range, and with the remaining size~$(n-1)$
distributed over the $k$ arguments. Those definitions entail the
following requirement about patterns and functions.

\begin{require}(distributions of pattern parts and function arguments)
  For every pattern $P \in {\cal P}$, and every part position~$i$,
  define the part value distribution $V_{P,i}(V)$ as a function
  of the whole value distribution~$V$.
  Similarly, for every function $f \in {\cal F}$, and every argument
  position~$i$, define the argument value distribution $V_{f,i}(V)$ as
  a function of the result value distribution~$V$.
\end{require}

Those cascading distributions can be seen as a generative grammar of
models and expressions, where distributions play the role of
non-terminals and where each pattern or a function play the role of a
production rule. Table~\ref{tab:arc:patterns} provides such a grammar,
simplifying distributions into types. For instance, we have
$V_{{\bf BgColor},1}({\it Grid}) = {\it Color}$ and
$V_{{\bf BgColor},2}({\it Grid}) = {\it Sprite}$.
We assume a uniform distribution over the models of same size, so that
the description length of the abstract syntax of a model given its
size~$n$ is defined as $\log_2 \#M(n,V)$.

It remains to encode the values and variables as a function of the
submodel value distribution~$V_y$, computed recursively through the
submodel context via $V_{P,i}$ and $V_{f,i}$. For a submodel $y: v$,
the value is encoded in $L(v \mid V_y)$ bits. For a submodel $y: x$,
the variable is encoded according to the uniform distribution over the
subset of variables in~$X$ that are compatible with~$V_y$, i.e.
$V_x \cap V_y \neq \emptyset$ where $V_x$ is the local distribution of
$x$ in the input model.


\subsection{Learning Goal and Search Space}
\label{search:space}

Given a task $T = \langle E, F \rangle$, task models can be
used under two inference modes:
\begin{itemize}
\item {\em Description} of a training example~$(v^i,v^o) \in E$, where
  both input and output values are known.
  \[ D^*_M(v^i,v^o) := \argmin_{\substack{D \in
        M\\{\it value}(D) = (v^i,v^o)}} L(D \mid M) \]
  This is the most compressive description compatible with the example, as generated by
  the task model.
\item {\em Prediction} of the output value given an input value,
  written as the application of the model to the input value.
  \begin{align*}
    M(v^i) & := {\it value}(d^{o*}) \\
           & \textrm{where } D^* = (d^{i*},d^{o*}) = \argmin_{\substack{D = (d^i,d^o) \in M\\ {\it value}(d^i) = v^i}} L(D \mid M)
    \end{align*}
  This is the output value of the most compressive description compatible with the input value, as generated by the task model.
\end{itemize}
In case there are no compatible description, $\bot$ is returned as the
undefined result. In case several descriptions have the same minimal
description length, the choice is left unspecified. In practice, it
can simply be specified through a total ordering on values.

A task model is a {\em solution} to a task if it correctly predicts the
output for all training examples. It is said to {\em generalize} if it is
also correct on all test examples.

\begin{definition}[correct task model]
  A task model~$M$ is said {\em correct} on an example~$(v^i,v^o)$
  iff $M(v^i) = v^o$.
\end{definition}

\begin{definition}[solution and generalization]
  A task model~$M$ is said to be a {\em solution} on task~$T$ if it is
  correct on all training examples~$E$. This solution is said to {\em
    generalize} if it is also correct on the test examples~$F$.
\end{definition}


We define a search space over all task models by defining an initial
task model and transitions between models so as to make all models
reachable. We note $M.x$ the submodel of $M$ at variable~$x$. We note
$M[x \leftarrow m']$ the substitution of $m'$ to $M.x$ in $M$
s.t. ${\it var}(m') = x$ (to avoid variable renaming) and
$X_{m'} \cap X_M = \{x\}$ (to avoid variable capture).

\begin{definition}[search space]
  The search space over task models is defined as
  $\langle {\cal S}, M_0, \Delta \rangle$ where:
  \begin{itemize}
  \item ${\cal S}$ is the set of states, here task models;
  \item $M_0 = (x_1:\ ?, x_2:\ ?)$ is the initial model, defined as
    the most unspecific task model that generates all pairs of values in $V^i \times V^o$;
  \item
    $\Delta \subseteq \{(M,x,m',M') \mid M, M' \in {\cal S}, x \in X_M, m' \in {\cal
      M}, M' = M[x \leftarrow m'] \}$ is a set of transitions from a
    task model~$M$ to a task model~$M'$ that results from the
    replacement of the submodel~$M.x$ by~$m'$. The resulting
    model~$M'$ must be well-formed, i.e. all variables used in~$m^o$
    remain defined in~$m^i$.
  \end{itemize}
\end{definition}

We first prove that this search space is complete -- in the sense that
all task models are reachable -- given a set of minimal
transitions. This ensures that solutions can be found by traversing a
finite number of transitions.

\begin{theorem}
  Every task model can be reached through a finite number of
  transitions by considering only two classes of {\em minimal
    transitions}:
  \begin{enumerate}
  \item $x \leftarrow P$: given the pattern $P \in {\cal P}$ of arity~$k$,
    for any model~$M$ where $M.x = x:\ ?$, the transition
    $(M,x,P(x_1:\ ?, \ldots, x_k:\ ?),M')$ where $x_1, \ldots, x_k$
    are fresh variables;
  \item $x \leftarrow e$: given the well-defined expression~$e$, for
    any model~$M$ where $M.x = x:\ ?$, the transition $(M,x,e,M')$.
  \end{enumerate}
\end{theorem}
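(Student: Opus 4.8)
The plan is to proceed by structural induction on the tree structure of models, building the target task model top-down from the roots of the initial model~$M_0$. The key observation is that the two minimal transitions exactly mirror the two non-trivial cases of the recursive definition of a model: transition~1 ($x \leftarrow P$) realizes a pattern node, transition~2 ($x \leftarrow e$) realizes an expression node, and an unknown node requires no transition at all since it is already present. Consequently the number of transitions needed will equal the number of pattern and expression nodes of the target, which is finite.

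First I would establish an auxiliary claim: for any model~$M$ containing an unknown at a variable~$x$ (i.e. $M.x = (x:\ ?)$) and any target submodel~$m'$ with ${\it var}(m') = x$ whose remaining variables are fresh with respect to~$M$, the model $M[x \leftarrow m']$ is reachable from~$M$ by finitely many minimal transitions, provided every variable referenced by an expression of~$m'$ is already defined in~$M$. The induction is on the structure of~$m'$. If $m'$ is an unknown, zero transitions suffice. If $m'$ is an expression $x: e$, then by hypothesis all its variables are defined, so $e$ is well-defined and a single transition~2 yields the result. If $m'$ is a pattern $x: P(m_1,\ldots,m_k)$, I first apply transition~1 to obtain $P(x_1:\ ?,\ldots,x_k:\ ?)$, choosing the fresh child variables to coincide with the root variables of the targets $m_1,\ldots,m_k$, and then apply the induction hypothesis to each part in turn, installing~$m_i$ at the now-unknown position~$x_i$. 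Since $m'$ is a finite tree, this terminates.

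To conclude, I would apply this claim to the two components of the target $M = (m^i,m^o)$, starting from $M_0 = (x_1:\ ?, x_2:\ ?)$. I build $m^i$ first at position~$x_1$: throughout this phase the output component is still the bare unknown~$x_2$, so it references no variable and well-formedness is trivially maintained, while any expression internal to~$m^i$ is installed only after the variables it references, following the (acyclic) dependency order of the target. I then build $m^o$ at position~$x_2$: by the task-model well-formedness condition every variable used in~$m^o$ is introduced in~$m^i$, and since $m^i$ is now fully built, every such variable is defined, so each expression of~$m^o$ is well-defined when installed and each intermediate model remains well-formed.

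The main obstacle is not the structural recursion itself but the careful maintenance of well-formedness and variable hygiene across the whole sequence. I must order the transitions so that (a) a pattern parent is expanded before its children, which is automatic from transition~1, and (b) any variable referenced by an expression is introduced strictly before that expression, which forces the input model to be completed before the output model and, within each model, a topological ordering of expression dependencies. The freshness side-condition $X_{m'} \cap X_M = \{x\}$ of the substitution must also be checked, but this is immediate: processing each target variable exactly once and top-down guarantees that the name chosen for a freshly introduced child has not yet appeared in the current model.
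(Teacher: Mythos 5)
Your proof is correct, but it runs in the opposite direction from the paper's. The paper argues \emph{backward} by induction on the size of the target model: it shows every non-initial task model~$M'$ has an incoming minimal transition from a strictly smaller model $M = M'.\overline{x}$ (peeling off an expression node if one exists anywhere, otherwise peeling a pattern whose parts are all unknowns), and invokes the induction hypothesis on~$M$. You instead argue \emph{forward}, via a substitution lemma proved by structural induction on the inserted submodel, and then schedule the construction explicitly (input model before output model, expressions only after their referenced variables exist). The two approaches trade different burdens. The paper's peeling argument is shorter because the case priority --- always peel an expression first --- silently resolves all ordering issues: patterns are only ever peeled when no expressions remain, so deleting their fresh unknown children cannot orphan any variable reference, and read forward the construction automatically installs all patterns before all expressions. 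Your forward construction is more informative --- it exhibits the transition sequence concretely and shows its length equals the number of pattern and expression nodes --- but it forces you to manage the ordering constraints by hand, and this is where your write-up has a small wrinkle: your auxiliary claim installs sibling subtrees ``in turn,'' which does not by itself cover an expression in one subtree referencing a variable introduced in a sibling subtree; you patch this with the dependency-order remark, which in effect reproduces the ``all pattern transitions before all expression transitions'' schedule that the paper obtains for free (and discusses separately, after the theorem, as one of the two simple valid orderings). Note also that the dependencies are automatically acyclic, as you assume, because an expression transition only requires the referenced \emph{nodes} to exist, and nodes are created exclusively by pattern transitions, never by expression transitions.
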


\begin{proof}
  By recurrence on the size of the target task model~$M'$.  If there
  is a $x \in X_{M'}$ s.t. $M'.x = e$, then there is a strictly
  smaller task model $M = M'.\overline{x}$ s.t. $(M, x, e, M')$ is a
  minimal transition, instance of $x \leftarrow e$. By recurrence
  hypothesis, $M$ is reachable, hence $M'$ is reachable.

  Otherwise, if there is $x \in X_{M'}$ s.t.
  $M'.x = P(x_1:\ ?, \ldots, x_k:\ ?)$, then there is a strictly
  smaller task model $M = M'.\overline{x}$ s.t.
  $(M,x, P(x_1:\ ?, \ldots, x_k:\ ?), M')$ is a minimal transition,
  instance of $x \leftarrow P$. By recurrence hypothesis, $M$ is
  reachable, hence $M'$ is reachable.

  Otherwise, $M'$ must be the initial model~$M_0$ because any model is
  one of a pattern, an expression or an unknown.
\end{proof}

Note that exprression transitions cannot be decomposed into
transitions that would introduce values, variables and functions one
at a time because functions can only be evaluated when all their
arguments are defined. This is the key difference with patterns, and
the key benefit of patterns that can be introduced in the model
piecewise.

The set of minimal transitions leading from~$M_0$ to any task
model~$M$ is uniquely defined as follows, introducing each pattern and
each expression in~$M$.
\[
  \begin{array}{lll}
    \Delta(M) & = & \{ x \leftarrow P \mid x \in X_M, M.x = P(m_1, \ldots, m_k) \} \\
              & \cup & \{ x \leftarrow e \mid x \in X_M, M.x = e \}
  \end{array}
\]
We can decompose it into $\Delta(M) = \Delta^i(M) \uplus \Delta^o(M)$,
where the first term is the subset of transition on the input model
($x \in X_{m^i}$), and the second term is the subset of transitions on
the output model ($x \in X_{m^o}$).

The order in which transitions can be applied is constrained. A
transition $x \leftarrow m'$ can only be triggered after variable~$x$
has been introduced. Moreover, an expression transition
$x \leftarrow e$ can only be triggered after all variables used by the
expression have been introduced. This results in a partial ordering
over $\Delta(M)$. In particular, input model transitions do not depend
on output model transitions; and ouput pattern transitions do not
depend on input transitions. Also, an output expression transition
depends on both input and output pattern transitions but not on other
expression transitions.
Every total ordering compatible with this partial ordering is a valid
path to the target task model~$M$. There are two simple strategies:
all input transitions before all output transitions, and all pattern
transitions before all expression transitions. But the most effective
search strategy could be a more mixed version.

In order to guide the search towards a solution, it is beneficial to
identify a property of task models that remains valid all throughout a
path from the initial state to the solution. Indeed, this may allow to
prune vast portions of the search space. This can be achieved by
relaxing the constraint to predict the expected output into the
constraint to find a description of the input-output pair.

\begin{definition}[consistent task model]
  A task model $M$ is said {\em consistent} with the task~$T$ --
  written $T \vdash M$ -- if it can describe all training examples,
  i.e.: 
  $D^*_M(v^i,v^o) \neq \bot$, for all $(v^i,v^o) \in E$.
\end{definition}

Optionally, consistency can be extended to test inputs by stating that
they must have a description by the input model
($v^i \in m^i[\emptyset]$).
We prove in the following that consistency can be used as a pruning
property when searching through minimal transitions. We start by
proving a lemma on models (not task models).

\begin{lemma}
  Let $m, m'$ be two models s.t. $m'$ results from the traversal of a
  minimal transition $x \leftarrow P$ or $x \leftarrow e$ from $m$,
  for some pattern~$P$ or expression~$e$, and environment
  $\sigma$. For every description generated by~$m'$, there is a
  description generated by~$m$.
  \[ d' \in m'[\sigma] \Longleftarrow d = d'.\overline{x} \in m[\sigma] \]
\end{lemma}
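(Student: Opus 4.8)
The plan is to reduce the statement to the factorization Lemma~\ref{lemma:descr}, which already splits the generation relation at any variable into a factor part and a context part. The only real work is to identify the $x$-context of $m'$ with $m$ itself; once that identification is made, the lemma follows in one line, matching the prose claim that every description generated by $m'$ yields a description generated by $m$, namely its $x$-context $d'.\overline{x}$.

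First I would record what a minimal transition does structurally. Both $x \leftarrow P$ and $x \leftarrow e$ are only applicable when $m.x = (x:\ ?)$, and they replace exactly that unknown submodel while leaving the surrounding model untouched. Consequently the two models agree outside~$x$, so $m'.\overline{x} = m.\overline{x}$; and since $m.x$ is already an unknown, reducing it to an unknown changes nothing, so $m.\overline{x} = m$. Combining these gives the key identity $m'.\overline{x} = m$. I would also note that Lemma~\ref{lemma:descr} is applicable at $x$: it is the transition site, hence $x \in X_{m'}$, and because $d' \in m'[\sigma]$ forces $d'$ to share the tree structure of $m'$, the variable $x$ also occurs in $d'$.

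With this in place the argument is immediate. Applying Lemma~\ref{lemma:descr} to $m'$ at $x$ yields
\[ d' \in m'[\sigma] \iff d'.x \in m'.x[\sigma] \land d'.\overline{x} \in m'.\overline{x}[\sigma]. \]
From the hypothesis $d' \in m'[\sigma]$, the right conjunct gives $d'.\overline{x} \in m'.\overline{x}[\sigma]$, and substituting the identity $m'.\overline{x} = m$ turns this into $d = d'.\overline{x} \in m[\sigma]$, which is exactly the claim. Because the derivation only touches the context $m'.\overline{x}$ and never inspects the new factor $m'.x$, no separate treatment of the pattern and expression cases is needed at this level; the distinction is confined to checking that the transition leaves the context fixed, which holds identically in both cases.

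The step I expect to be the crux -- though it is more a matter of bookkeeping than of genuine difficulty -- is justifying $m'.\overline{x} = m$, i.e. verifying from the definitions of $x$-context and of the minimal transitions that everything outside $x$ is left unchanged and that reducing an already-unknown submodel is a no-op. Once that identity is granted, the result is a one-line corollary of Lemma~\ref{lemma:descr}, with no case analysis and no appeal to the specific form of $P$ or $e$.
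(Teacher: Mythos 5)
Your proof is correct and is essentially the paper's own argument: the paper likewise observes that, by definition of a minimal transition (which replaces an unknown at $x$), one has $m'.\overline{x} = m$, and then invokes Lemma~\ref{lemma:descr} to conclude; you have merely spelled out the bookkeeping behind that identity. Note that you prove the direction stated in the prose ($d' \in m'[\sigma]$ yields $d'.\overline{x} \in m[\sigma]$), which is the direction actually used later in Theorem~\ref{theo:consistency}, and which is the sensible reading of the lemma despite the reversed arrow in its displayed formula.
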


\begin{proof}
  This is a direct consequence of Lemma~\ref{lemma:descr} because,
  from the definition of minimal transitions, we have
  $m'.\overline{x} = m$. Indeed, the original submodel at $x$ is the
  unknown.
\end{proof}

We can now prove the anti-monotony of consistency: if a task model is
inconsistent, then all task models reachable from it through minimal
transitions will also be inconsistent.

\begin{theorem}\label{theo:consistency}
  Let $T$ be a task. For every task model $M_2$ that results from the
  traversal of a minimal transition $x \leftarrow m'$ from a task
  model~$M_1$, we have that if $M_1$ is not consistent with the task, then
  $M_2$ is not consistent either: $T \nvdash M_1 \Rightarrow T \nvdash M_2$.
\end{theorem}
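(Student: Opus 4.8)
The plan is to prove the contrapositive statement that consistency propagates \emph{downwards} along a minimal transition, namely $T \vdash M_2 \Rightarrow T \vdash M_1$. Since, by definition, a task model $M$ is consistent with $T$ exactly when every training example $(v^i,v^o) \in E$ admits at least one generated description $D \in M$ with ${\it value}(D) = (v^i,v^o)$ (i.e. $D^*_M(v^i,v^o) \neq \bot$), it suffices to establish a per-example \emph{transfer property}: if $M_2$ generates a description of $(v^i,v^o)$, then so does $M_1$. Contraposing this example-wise statement yields the theorem at once, for an example witnessing $T \nvdash M_1$ -- one with no $M_1$-description -- can then have no $M_2$-description either, so $T \nvdash M_2$.

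For the transfer property, fix $(v^i,v^o)$ and assume $M_2 = (m_2^i,m_2^o)$ generates $D_2 = (d_2^i,d_2^o)$ with ${\it value}(d_2^i)=v^i$, ${\it value}(d_2^o)=v^o$, i.e. $d_2^i \in m_2^i[\emptyset]$ and $d_2^o \in m_2^o[\sigma_{d_2^i}]$. I would then split on where the minimal transition $x \leftarrow m'$ acts. If it acts on the input model, then $m_2^o = m_1^o$ and $m_2^i.\overline{x} = m_1^i$, so the preceding lemma (applied to the input models in the environment $\emptyset$) gives $d_1^i := d_2^i.\overline{x} \in m_1^i[\emptyset]$, and I would keep $d_1^o := d_2^o$. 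If it acts on the output model, then $m_2^i = m_1^i$, so I would take $d_1^i := d_2^i$ and, by the same lemma applied to the output models in the environment $\sigma_{d_2^i}$, set $d_1^o := d_2^o.\overline{x} \in m_1^o[\sigma_{d_2^i}]$. In either case, collapsing a subtree to its atomic value leaves the \emph{root} value unchanged, so ${\it value}(d_1^i)=v^i$ and ${\it value}(d_1^o)=v^o$, producing a candidate $D_1 = (d_1^i,d_1^o)$ for the example.

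The step I expect to be the real obstacle is the coupling between the two halves of a task model in the input-transition case. There the output model is generated in the environment $\sigma_{d^i}$ read off the input description, yet I have replaced $d_2^i$ by its poorer context $d_1^i = d_2^i.\overline{x}$, which no longer defines the fresh part variables $x_1,\ldots,x_k$ introduced by a pattern transition; I must therefore upgrade $d_2^o \in m_1^o[\sigma_{d_2^i}]$ to $d_2^o \in m_1^o[\sigma_{d_1^i}]$ so that $D_1 \in M_1$. This is exactly where well-formedness of $M_1$ is used: every variable occurring in $m_1^o = m_2^o$ belongs to $X_{m_1^i}$ and is hence distinct from $x_1,\ldots,x_k$, while the context reduction $d_2^i \mapsto d_2^i.\overline{x}$ preserves the value of every $X_{m_1^i}$-variable, $x$ included (it keeps the value of $d_2^i.x$). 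Consequently $\sigma_{d_1^i}$ and $\sigma_{d_2^i}$ agree on every variable $m_1^o$ can reference, all its expression evaluations coincide, and $m_1^o[\sigma_{d_1^i}] = m_1^o[\sigma_{d_2^i}]$ as sets of generated descriptions. The output-transition case avoids this difficulty entirely since there $d_1^i = d_2^i$ leaves the environment untouched. With this verified, $D_1 \in M_1$ with the required value, the transfer property holds, and the theorem follows by contraposition.
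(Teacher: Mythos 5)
Your proof is correct and follows the same overall route as the paper's: prove the contrapositive $T \vdash M_2 \Rightarrow T \vdash M_1$ example by example, note that minimality of the transition gives $M_2.\overline{x} = M_1$, and use Lemma~\ref{lemma:descr} to collapse the description at~$x$ so that it is generated by the smaller model. The difference is in the case analysis. The paper declares ``without loss of generality, we assume that $x$ belongs to the output model'' and only writes out that case; but the two cases are not symmetric, precisely for the reason you isolate: when the transition acts on the input model, replacing $d_2^i$ by $d_2^i.\overline{x}$ changes the environment $\sigma_{d_2^i}$ in which the (unchanged) output model must regenerate $d_2^o$, so the output half of the example description does not transfer for free. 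Your resolution --- the collapsed variables are exactly the fresh ones introduced by the transition, well-formedness of $M_1$ confines the variables referenced by $m_1^o = m_2^o$ to $X_{m_1^i}$, and the environment of $d_2^i.\overline{x}$ is the restriction of $\sigma_{d_2^i}$ to those variables (the value at $x$ itself being preserved), so expression evaluation and hence the generated set $m_1^o[\cdot]$ are unaffected --- is exactly what is needed to justify the symmetry the paper leaves implicit. In that respect your proof is strictly more complete than the paper's: the paper buys brevity, yours supplies the rigor on the one genuinely delicate point of the theorem.
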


\begin{proof}
  We prove the contrapositive, assuming $T \vdash M_2$ and proving
  that $T \vdash M_1$.  First, we can observe that, by definition of a
  minimal transition, $M_2.\overline{x} = M_1$. Without loss of
  generality, we assume that $x$ belongs to the output model, so that
  $m_2^o.\overline{x} = m_1^o$, and $m_2^i = m_1^i$.

  We have $T \vdash M_2$\\
  $\Rightarrow \forall (v^i,v^o) \in E: D^*_{M_2}(v^i,v^o) \neq \bot$\\
  $\Rightarrow \forall (v^i,v^o) \in E: \exists (d_2^i,d_2^o) \in M_2: {\it value}(d_2^i) = v^i, {\it value}(d_2^o) = v^o$\\
  $\Rightarrow \forall (v^i,v^o) \in E: \exists (d_2^i,d_2^o): d_2^i \in m_2^i[\emptyset], d_2^o \in m_2^o[\sigma_{d_2^i}]$\\
  $\Rightarrow \forall (v^i,v^o) \in E: \exists (d_2^i,d_2^o): d_2^i \in m_2^i[\emptyset], d_2^o.\overline{x} \in m_2^o.\overline{x}[\sigma_{d_2^i}]$ (Lemma~\ref{lemma:descr})\\
  $\Rightarrow \forall (v^i,v^o) \in E: \exists (d_2^i,d_2^o): d_2^i \in m_1^i[\emptyset], d_2^o.\overline{x} \in m_1^o[\sigma_{d_2^i}]$ (see above observation)\\
  $\Rightarrow \forall (v^i,v^o) \in E: \exists (d_2^i,d_2^o): (d_2^i, d_2^o.\overline{x}) \in M_1$ (by definition of $D \in M$)\\
  $\Rightarrow D^*_{M_1}(v^i,v^o) \neq \bot$, from the values of $d_2^i$ and $d_2^o$ above\\
  Hence $T \vdash M_1$.  
\end{proof}

\HIDE{
As a corollary, the minimal transitions of a model~$M$ to consider are
only the following ones:
\begin{itemize}
\item $x \leftarrow P$, when $\forall (v^i,v^o) \in E: \exists D \in M:  D^*_M(v^i,v^o): P^{-1}({\it value}(D.x)) \neq \emptyset$
\item $x \leftarrow e$, when
  $\forall (v^i,v^o) \in E: \exists D \in D^*_M(v^i,v^o): {\it eval}(e,\sigma) = {\it value}(D.x)$,
  where the environment is $\sigma = \emptyset$ when $x$ is in
  the input model, and $\sigma = \sigma_{d^i}$ when $x$ is in
  the output model.
\end{itemize}
}

We note $\Delta(M \mid T) \subseteq \Delta(M)$ the subset of minimal
transitions of a task model that are consistent with the task. We can
also write $\Delta^i_P(M \mid T)$ to restrict to consistent
pattern-transitions on the input model, or $\Delta^o_e(M \mid T)$ to
restrict to consistent expression-transitions on the output model. We
can also write $\Delta_x(M \mid T)$ to restrict to consistent
transitions on variable~$x \in X_M$.

Although consistency enables to prune out patterns and expressions
that do not agree with the training examples, it may still remain
multiple consistent transitions at each step, entailing an exponential
growth of reachable states. It is therefore desirable to also have a
heuristic to guide the search towards the more promising region. We
use the description length~$L(M,D)$ defined in the previous section
along the MDL principle: the best models are those that compress the
data the more. Note that, although description length tends to
decrease with more specific models, a more specific model can have a
larger description length if the inserted pattern is not a good model
of the examples.

\section{Algorithms and Pragmatic Aspects}
\label{algos}

In this section, we detail the key algorithms of the MADIL approach,
from model-based parsing to search for a task model that solves a
task. Those algorithms are generic, and we point at the parts that
depend on the task domain. We discuss the few parameters that help
control the search for parses and the search for task models. We also
discuss a few pragmatic aspects, i.e. small deviations from the theory
and algorithms that are motivated by performance issues.

\subsection{Parsing and Generation: Finding Good Descriptions}
\label{algos:parsing}
\label{algos:generation}


\begin{algorithm}[t]
  \caption{Parsing a value into descriptions with a model}
  \label{algo:parse}
  \begin{algorithmic}[1]
    \Require $V$: a distribution of values,
    $m$: a model,
    $\sigma$: an environment
    \Require $v \sim V$: the value to be parsed
    \Ensure a lazy sequence of pairs $(d,l)$ s.t. $d \in m[\sigma]$, ${\it value}(d) = v$, and \mbox{$l = L(d \mid V, m[\sigma])$}
    \Function{parse}{$V, m, \sigma, v$}
    \If{$m$ {\bf like} $x: \texttt{?}$}
      \State \textbf{yield} $(x: v, L(v \mid V))$
    \ElsIf{$m$ {\bf like} $x: e$}
      \If{$\text{eval}(e, \sigma) = v$}
        \State \textbf{yield} $(x: v, 0)$
      \EndIf
    \ElsIf{$m$ {\bf like} $x: P(m_1, \ldots, m_k)$}
        \ForAll{$(v_1, \ldots, v_k) \in P^{-1}(v)$}
        \ForAll{$(d_1, l_1) \in \text{\sc parse}(V_{P,1}(V), m_1, \sigma, v_1)$}
        \State \quad\vdots
        \ForAll{$(d_k, l_k) \in \text{\sc parse}(V_{P,k}(V,v_1, \ldots, v_{k-1}), m_k, \sigma, v_k)$}
        \State \textbf{yield} $(x: v = P(d_1, \ldots, d_k), l_1 + \ldots + l_k)$
        \EndFor
        \EndFor
        \EndFor
    \EndIf
    \EndFunction    
  \end{algorithmic}
\end{algorithm}

Algorithm~\ref{algo:parse} defines the {\sc parse} function that
outputs a lazy sequence of all descriptions -- along with their
description length -- that belong to a model~$m$ and that have some
fixed value~$v$, under some environment~$\sigma$. It is defined
recursively by induction on the model syntax tree, along
Definition~\ref{def:model:descr} and Section~\ref{dl:data}. As the
sequence is computed lazily, it is possible to efficiently compute a
limited number of descriptions.
The parts of the algorithm that are specific to a domain are:
$L(v \mid V)$, the description length of a value; $P^{-1}(v)$, the
pattern-based decomposition of a value; and
$V_{P,i}(V,v_1,\ldots,v_{i-1})$, the value distributions of each part
of a pattern.
It is not possible in general to generate descriptions in increasing
DL order.  However, the order of generation can be improved in two
ways.
First, $P^{-1}$ can be designed so as to generate the more
promising decompositions first. For example, with the ${\bf Motif}$
pattern, a smaller periodic pattern is more promising than a larger
one because it reduces a grid to a smaller subgrid.
Second, for the $k$ nested loops in lines 9-11 that perform a
Cartesian product of the $k$ sequences of subdescriptions for each
part, a sorted Cartesian product by rank can be used to favor low-rank
descriptions.

\begin{algorithm}[t]
  \caption{Describing a pair of values with a task model}
  \label{algo:describe}
  \begin{algorithmic}[1]
    \Require $M = (m^i, m^o)$: a task model
    \Require $v^i \sim V^i$, $v^o \sim V^o$: the input and output values to be described
    \Ensure a list of pairs $(D, L)$ s.t. $D \in M$ and $L = L(D \mid M)$, in ascending $L$-order
    \Function{describe}{$M,v^i,v^o$}
    \State $S \gets \emptyset$
    \ForAll{$(d^i,l^i) \in \textsc{top}_{K_p}(\textsc{sample}_{N_p}(\textsc{parse}(V^i,m^i,\emptyset,v^i)))$}
    \ForAll{$(d^o,l^o) \in \textsc{top}_{K_p}(\textsc{sample}_{N_p}(\textsc{parse}(V^o,m^o,\sigma_{d^i},v^o)))$}
    \State $D \gets (d^i,d^o)$
    \State $L \gets l^i + l^o$
    \State $S \gets S \cup \{(D,L)\}$
    \EndFor
    \EndFor
    \State \textbf{return} $S$ sorted by ascending $L$
    \EndFunction
  \end{algorithmic}
\end{algorithm}

Algorithm~\ref{algo:describe} builds on the {\sc parse} function to
define the {\sc describe} function that outputs a list of example
descriptions~$D$ that belong to a task model~$M$ for a fixed pair of
input-output values, in increasing DL order. For each description of
the input, it computes descriptions of the output, and sum their
description lengths.
For tractability reason, we approximate the result by sampling $N_p$
descriptions and selecting the top $K_p$ descriptions (smaller DL),
both for input and output. The maximum number of returned example
descriptions is therefore $K_p^2$. Typical parameter values are
$N_p=100$ and $K_p=3$, we study the impact of those parameters in the
evaluation section (Section~\ref{eval}).

\begin{algorithm}[t]
  \caption{Generating descriptions with a model}
  \label{algo:generate}
  \begin{algorithmic}[1]
    \Require $V$: a distribution of values, $m$: a model, $\sigma$: an environment
    \Ensure a lazy sequence of pairs $(d,l)$ s.t. $d \in m[\sigma]$ and $l = L(d \mid V, m[\sigma])$
    \Function{generate}{$V, m, \sigma$}
    \If{$m$ {\bf like} $x: \texttt{?}$}
      \ForAll{$v \sim V$}
        \State \textbf{yield} $(x: v, L(v \mid V))$
      \EndFor
    \ElsIf{$m$ {\bf like} $x: e$}
      \State{$v \gets \text{eval}(e, \sigma)$}
      \State \textbf{yield} $(x: v, 0)$
    \ElsIf{$m$ {\bf like} $x: P(m_1, \ldots, m_k)$}
        \ForAll{$(d_1, l_1) \in \text{\sc generate}(V_{P,1}(V), m_1, \sigma)$}
        \State \quad\vdots
        \ForAll{$(d_k, l_k) \in \text{\sc generate}(V_{P,k}(V,v_1, \ldots, v_{k-1}), m_k, \sigma)$}
        \State $v \gets P(v_1, \ldots, v_k)$
        \State \textbf{yield} $(x: v = P(d_1, \ldots, d_k), l_1 + \ldots + l_k)$
        \EndFor
        \EndFor
    \EndIf
    \EndFunction    
  \end{algorithmic}
\end{algorithm}

Algorithm~\ref{algo:generate} defines the {\sc generate} function that
outputs a lazy sequence of all descriptions that belong to a model~$m$,
under some environment~$\sigma$. It is defined similarly to function
{\sc parse}, except that values are drawn from distributions or
expression evaluation, and composed by patterns. Note that expression
evaluation (line~6) and pattern composition (line~12) may be undefined
for some parameters, in which case there is no yield.
The parts of the algorithm that are specific to a domain are:
$v \sim V$, drawing values from a distribution; $P(v_1,\ldots,v_k)$,
pattern-based composition; and $V_{P,i}(V,v_1,\ldots,v_{i-1})$, the
value distributions of each part of a pattern.
It is not possible in general to generate descriptions in increasing
DL order.  However, the order of generation can be improved similarly
to function {\sc parse} by using a sorted Cartesian product by rank
for the $k$ nested loops in lines 9-11.
%

\begin{algorithm}[t]
  \caption{Predicting outputs from an input with a task model}
  \label{algo:predict}
  \begin{algorithmic}[1]
    \Require $M = (m^i, m^o)$: a task model
    \Require $v^i \sim V^i$: the input value to be described
    \Ensure a list of pairs $(v^o,L)$ in ascending $L$-order
    \Function{predict}{$M,v^i$}
    \State $S \gets \emptyset$
    \ForAll{$(d^i,l^i) \in \textsc{top}_{K_p}(\textsc{sample}_{N_p}(\textsc{parse}(V^i,m^i,\emptyset,v^i)))$}
    \ForAll{$(d^o,l^o) \in \textsc{top}_{K_g}(\textsc{sample}_{N_g}(\textsc{generate}(V^o,m^o,\sigma_{d^i})))$}
    \State $v^o \gets \text{value}(d^o)$
    \State $L \gets l^i + l^o$
    \State $S \gets S \cup \{(v^o, L)\}$
    \EndFor
    \EndFor
    \State \textbf{return} $S$ sorted by ascending $L$, minus duplicate values
    \EndFunction
  \end{algorithmic}
\end{algorithm}

Algorithm~\ref{algo:predict} builds on the {\sc parse} and {\sc
  generate} functions to define the {\sc predict} function that
outputs a list of predicted outputs~$v^o$ given a task model~$M$ and a
fixed input~$v^i$, in increasing DL order. For each description of the
input, it generates output descriptions, hence an output value, and
sum their description lengths.
For tractability reason, we approximate the result by sampling $N_p$
input descriptions, selecting the top $K_p$ input descriptions among
them, and finally by sampling $N_g$ output descriptions and selecting
the top~$K_g$ for each selected input description. The maximum number
of predicted outputs is therefore $K_pK_g$, although only the first
few are used in benchmarks (3 in ARCathon, 2 in ArcPrize). Typical
parameter values are $N_p=100$, $K_p=3$, and $N_g=K_g=3$, we study the
impact of those parameters in the evaluation section
(Section~\ref{eval}).

\subsection{Transitions: Finding Promising Model Refinements}
\label{algos:transitions}

\begin{algorithm}[t]
  \caption{Consistent transitions from a given task model}
  \label{algo:transitions}
  \begin{algorithmic}[1]
    \Require $M$ a task model
    \Require $\{D_{ij}\}$ a set of example descriptions, indexed by example ID $i$, and by parsing rank $j$
    \Ensure $\Delta$: a list of transitions $(M,x,m',M')$ from $M$, consistent with $\{D_{ij}\}$, in ascending $L$-order
    \Function{transitions}{$M, \{D_{ij}\}$}
    \State $\Delta \gets \emptyset$
    \ForAll{$x \in X_M$ s.t. $M.x$ is not an expression}
      \State $\{\sigma_{ij}\} \gets \{\sigma_{d^i} \textrm{~if~} x \in X_{m^o} \textrm{~else~} \emptyset \mid D_{ij} = (d^i,d^o)\}$ 
      \State $\{L_{ij}\} \gets \{L(M, D_{ij})\}$
      \ForAll{$(m',L) \in \bigcap_{i,+} \bigcup_{j,{\it min}} \textsc{submodels}(M.x, \sigma_{ij}, D_{ij}.x, L_{ij})$}
        \State $M' \gets M[x \gets m']$
        \State $L \gets L - L(M.x) + L(m')$
        \State $\Delta \gets \Delta \cup \{ ((M,x,m',M'), L) \}$
      \EndFor
    \EndFor
    \State \textbf{return} $\Delta$ sorted by ascending $L$
    \EndFunction

    \Function{submodels}{$m, \sigma, d, L$}
    \State $v, V \gets {\it value}(d), {\it distrib}(d)$
    \State $l \gets L(d \mid V, m[\sigma])$
    \State ${\cal E} \gets$ a finite collection of expressions evaluated over $\sigma$
    \State ${\cal M}_P \gets$ a finite collection of pattern-based sub-models for $V$-values
    \State \textbf{return} $\{(v, L-l)\}$
    \State \hspace*{1.3cm} $\cup\ \{(e, L-l) \mid e \in {\cal E}, {\it eval}(e,\sigma) = v\}$
    \State \hspace*{1.3cm} $\cup\ \{(m', L-l+l') \mid m' \in {\cal M}_P, \textsc{parse}(V,m',\sigma,v) = (d',l'), \ldots \}$
    \EndFunction
  \end{algorithmic}
\end{algorithm}

Algorithm~\ref{algo:transitions} defines function {\sc transitions}
that returns a list of consistent transitions starting from a task
model~$M$, given a list of lists of example descriptions, where each
sublist is generated by function~{\sc describe} on a training example.
The returned transitions include the minimal transitions defined above
but also consider refining a pattern by a more specific pattern or an
expression. That is why transitions are computed for every submodel
that is not an expression (line~3), i.e. is an unknown or a
pattern. This makes search less dependent on the exact ordering of
transitions.
For a given submodel~$M.x$, candidate transition submodels~$m'$ are
computed for each example description~$D_{ij}$ through function~{\sc
  submodels}, and then aggregated by union over the different
descriptions~$D \in \{D_{ij}\}$ of an example, and finally by
intersection over all examples (line~6). The rationale is that a
consistent submodel must be consistent for {\em some} description of
{\em every} example. Function~{\sc submodels} also take as input the
environment relative to the description~$D_{ij}$ and to the location
of the submodel, either in the output or in the input (line~4); and
the DL of that description relative to the current model~$M$ (line~5).
Each submodel~$m'$ comes with an estimate
DL~$L = L(M,D) - L(D.x \mid M.x) + L(D.x \mid m')$ that takes into
account the replacement of~$M.x$ by~$m'$ in the encoding of the
description.
Therefore, {\em minimum} is used to combine the DLs of the different
descriptions of an example, and {\em addition} is used to combine the
DLs of the different examples.
Lines~7-9 computes the target model~$M'$ of the transition, and the estimate DL
combining model change and description change.

Function~{\sc submodels} return three kinds of candidate
submodels~$m'$ for a given submodel~$m$ and the value~$v$ of its
corresponding sub-description~$d$: the constant value~$v$,
expressions~$e$, and pattern-based submodels~$P(\ldots)$.
Expressions and their values are taken from a finite
collection~${\cal E}$ derived from the environment~$\sigma$. For
instance, that collection could be all well-typed expressions composed
of up to 6 $\sigma$-variables, constants and function calls. For
efficiency, expressions are indexed by their value into a DAG data
structure~\cite{FlashFill2013}. This offers a compact representation
of a large set of expressions, and the quick retrieval of all
expressions that evaluate to some value, here~$v$.
Pattern-based models are taken from a finite collection~${\cal M}_P$
derived from the type of value~$v$. This collection should at least
contain models like $P(?, \ldots, ?)$ for all relevant patterns, and
it may also contain models composed of several patterns and constant
values, as a shorthand for common sequences of transitions. This may
also increase the chance to find a compressive transition. Only
pattern-based models that can parse $v$ are retained as candidate
transition.
The estimate DL is $L-l+l'$ where $L$ is the current whole DL, $l$ is
the current local description DL, and $l'$ is the new local
description DL under the candidate submodel~$m'$. The latter is zero
for expressions, which includes constant values, as there is nothing
left to encode.

The returned set of transitions may be incomplete for two reasons.
The first reason is that at most $K_p^2$ descriptions per example are
used to generate candidate submodels. Increasing~$K_p$ has an important
impact on the cost of computing example descriptions and transitions.
The second reason is that only a finite subset of expressions are
considered. In practice, the limitation is both on the size of
expressions (max. $S_e$), and on their number (max. $N_e$).

Finally, note that the returned DL is only an estimate about the
actual DL because it only re-parses the value of the
sub-descriptions. Re-parsing the whole value with the entire new model
may find a more compressive descriptions.

\subsection{Greedy Search: Finding Most Compressive Models}
\label{algo:search}

\begin{algorithm}[t]
  \caption{Greedy search of the most compressive task model}
  \label{algo:search:greedy}
  \begin{algorithmic}[1]
    \Require $T = (E,F)$: a task
    \Ensure $M$: the best found task model
    \Function{GreedySearch}{$T$}
    \State{$M \gets M_0$}
    \While{$\exists (v^i,v^o) \in E: v^o \not\in \textrm{\sc top}_{K}(\textrm{\sc predict}(M,v^i))$}
    \State{$\{D_{ij}\} \gets \{ \textrm{\sc describe}(M,v^i, v^o) \mid (v^i,v^o) \in E\}$}
    \State{$\Delta \gets \textrm{\sc top}_{K_t}(\textrm{\sc transitions}(M, \{D_{ij}\}))$}
    \State{$M'_{best} \gets \argmin_{M' \in \Delta} L(M',E)$}
    \If{$L(M'_{best},E) < L(M,E)$}
    \State{$M \gets M'_{best}$}
    \Else
    \State{{\bf break}}
    \EndIf    
    \EndWhile
    \State{{\bf return} $M$}
    \EndFunction
  \end{algorithmic}
\end{algorithm}

Algorithm~\ref{algo:search:greedy} defines function~{\sc GreedySearch}
that performs a greedy search over the space of task models, for a
given task~$T$. It starts from the initial task model (line~2) and,
while the current model~$M$ does not predict correctly all training
examples (line~3), it computes their descriptions (line~4) and from
there, a set of candidate transitions to refined models~$M'$
(line~5). The most compressive one is identified (line~6), and if it
is more compressive than the current model (line~7), it becomes the
new current model (line~8) and the process starts again, until it is
not possible to compress more (line~10).

Two parameters are involved in this search. $K$ is the number of
allowed attempts for prediction, e.g. 2 in ArcPrize. $K_t$ is the
maximum number of candidate transitions to consider, typically 100. It
is useful to control efficiency because computing the description
length $L(M',E)$ implies the costly computation of descriptions for
each example.

\subsection{Pragmatic Aspects}

There are a number of pragmactic aspects that were neglected in the
above formalization for the sake of simplicity but that play a
significant role in the implementation and experimental results. We
here describe them shortly.

\paragraph{Description lengths}
The two-part MDL definition $L(M,D) = L(M) + L(D \mid M)$ assumes that
$D$ is all the data to be modeled. However, in a program synthesis
setting like ARC, $D$ is only a small set of input-output pairs among
a large set of pairs that are valid for the task. We therefore
introduce a {\em rehearsal factor}~$\alpha$ in the above definition --
$L(M,D) = L(M) + \alpha L(D \mid M)$ -- in order to give more weight
to the data, and hence allow for more complex models. If the value of
$\alpha$ is too low, then the search may stop too early, missing key
decompositions in order to solve the task. If its value is too high,
the search may favor overly complex models w.r.t. examples, with a
risk of overfitting. The typical value used for ARC is $\alpha = 100$,
we compare with other values in the evaluation section.

Another difficulty that occurs in ARC is that, for some tasks, the
output grids are much smaller than the input grids. The consequence is
that the search concentrates entirely on compressing the inputs, not
paying attention to compressing the outputs. However, the latter is
necessary to find a predictive model whereas in may cases it is not
necessary to maximally compress the inputs. We therefore use a {\em
  normalized DL} that gives equal weight to the input and the output,
based on the initial model~$M_0 = (m^i_0,m^o_0)$,
\[ \hat{L}(M,E) = \frac{L(m^i,E^i)}{L(m^i_0,E^i)} +
  \frac{L(m^o,E^o)}{L(m^o_0,E^o)}, \] where $E^i$ and $E^o$
respectively denote the training inputs and training outputs. As
greedy search only proceeds with more and more compressive models, the
normalized DL is in the interval~$[0,2]$.

\paragraph{Transitions}
Most patterns combine two features: the verification that a value
matches the pattern, and the decomposition of the value into
parts. For instance, pattern {\bf Monocolor} verifies that a grid uses
a single color, and then decomposes it into a color and a mask. Some
patterns, like {\bf IsFull}, only feature verification; and other
patterns, like {\bf Vec}, only feature decomposition (a 2D vector can
always be decomposed into two integers). The difficulty is that pure
decomposition patterns are generally not compressive because they only
expose the internal structure of a value, like {\bf Vec} exposing the
two components of a 2D vector. As a consequence they are not selected
during search although they can be useful to expose a part that can be
compressed by a pattern or expression. We therefore extend the minimal
transitions~$m'$ by wrapping them by pure decomposition patterns~$P$,
in the form $P(\ldots, m', \ldots)$. An example is
${\bf Vec}({\it add}(x,1), ?)$ defining the first component of a
vector by an expression, the second component remaining to be
determined. This wrapping process may be repeated up to~$S_d$ times by
nesting several decompositions.

\paragraph{Search}
Every model considered by the search is guaranteed to parse all
training inputs, thanks to consistency
(Theorem~\ref{theo:consistency}). However the returned model may fail
to parse test inputs. This can be avoided by refining the search
algorithm so that models that fail to parse test inputs are
pruned. This is legitimate in ARC competitions where test inputs are
available to the learning system, and this similar to a human passing
an IQ test where both demonstration examples and test inputs are
available together. However, at least for most ARC tasks, it is not
necessary to take test inputs into account in order to come up with a
model that generalizes correctly to the unseen test examples.

The found models may overfit the training examples by including
regularities across them that are not essential to the task.  For
example, in task {\tt 47c1f68c} (Figure~\ref{fig:example}), all train
inputs and outputs have a black background but it could be any color
provided that it is the same color in both inputs and outputs. Adding
the parse checks about test inputs (see previous paragraph) is a
solution to ensure generalization to test examples but the model may
still overfit, failing to generalize to unseen examples. In the
example task, the test example also has black as background color. We
therefore add a {\em pruning phase} after the search phase. It
consists in replacing patterns and values in the input model by
unknowns whenever this does not reduce the prediction accuracy of the
model. In the example task, the constant value ${\it black}$ is
replaced by an unknown. By making the input model more general, we
enlarge the domain of application of the task model, hence
generalization to unseen examples. Pruning does not apply to the
output model because it would increase the number of predicted
outputs, hence reducing model accuracy.

\begin{table}[t]
  \centering
  \begin{tabular}{llr}
    parameter & description & default \\
    \hline
    $N_p$ & max nb. parsed descriptions & 100 \\ 
    $K_p$ & nb. top parsed descriptions & 3 \\ 
    $N_g$ & max nb. generated descriptions & 3 \\ 
    $K_g$ & nb. top generated descriptions & 3 \\ 
    $S_e$ & max expression size & 6 \\ 
    $N_e$ & max nb. expression candidates & 1000 \\ 
    $K_t$ & max nb. evaluated transitions & 100 \\ 
    $K$ & nb. allowed prediction attempts & 3 \\ 
    $\alpha$ & rehearsal factor & 100 \\
    $S_d$ & max nb. decompositions wrapping transitions & 1 \\ 
    ${\it testcheck}$ & parse-check of test inputs & true \\
    ${\it pruning}$ & pruning phase & true \\
  \end{tabular}
  \caption{All parameters involved in MADIL algorithms, along with their default value}
  \label{tab:params}
\end{table}

\paragraph{Parameters}
Table~\ref{tab:params} summarizes all parameters used in the above
algorithms, along with a short description and their default value in
experiments.

\section{Advanced Contributions}
\label{sec:advanced}

This section presents advanced contributions that significantly
improve performance on ARC: collections of parts, patterns that
depends on values, and improved search based on MCTS.

\subsection{Collections of Parts}
\label{collections}

\begin{table}[t]
\centering
\begin{tabular}{|l|l|}
  \hline
  type & patterns \\
  \hline
  \hline
  {\it Sprite} & {\bf Objects}(size$:$ {\it Vec},\ seg$:$ {\it Seg},\ order$:$ {\it Order},\ n$:$ {\it Int}, \\
       & \hspace{1cm} objects$:$ [{\it Object}]), \\
       & {\bf Metagrid}(sepcolor$:$ {\it Color},\ borders$:$ {\it Mask},\ dims$:$ {\it Vec}, \\
       & \hspace{1cm} heights$:$ [{\it Int}],\ widths$:$ [{\it Int}],\ subgrids$:$ [[{\it Sprite}]]), \\
       & {\bf ColorRow}(size$:$ {\it Int},\ colors$:$ [{\it Color}]), \\
       & {\bf ColorCol}(size$:$ {\it Int},\ colors$:$ [{\it Color}]), \\
       & {\bf ColorMat}(size$:$ {\it Vec},\ colors$:$ [[{\it Color}]]) \\
  \hline
  {\it Object} & {\bf Obj}(pos$:$ {\it Vec},\ sprite$:$ {\it Sprite}) \\
  \hline
  [[{\it Color}]] & {\bf AsGrid}(grid$:$ {\it Grid}) \\
  \hline
  {\it ColorMap} & {\bf DomainMap$_C$}(colors$:$ [{\it Color}]) \\
  \hline
  [{\it Int}] & {\bf Range}(start$:$ {\it Int},\ step$:$ {\it Int}) \\
  \hline
  $X^n$ & {\bf Cons$_{d<n}$}(head$:$ $X^{n-1}$,\ tail$:$ $X^n$), {\bf Repeat$_{d<n}$}(item$:$ $X^{n-1}$) \\
  \hline
\end{tabular}
\caption{Patterns with sequences of parts by type}
\label{tab:arc:patterns:seq}
\end{table}

The patterns defined so far have a fixed number of parts
(Section~\ref{pattern}). However, we can think of several patterns
that result in variable numbers of parts: segmenting a sprite into a
collection of objects; splitting a grid into subgrids; decomposing a
1D grid into a sequence of colors. Rather than allowing a variable
number of arguments in patterns, which would be confusing when there
are different types of parts, we extend the domain of values with {\em
  sequences} of values, sequences of sequences of values, and so
on. In the future other kinds of collections could be introduced.
Given a type of values~$X$, [$X$] denotes a sequence of $X$-values,
and [[$X$]] a sequence of sequences of $X$-values. More generally,
$X^n$ denotes $n$ layers of sequences around $X$-values, which we call
a {\em $n$D sequence}. Hence [$X$] is a 1D sequence and [[$X$]] is a
2D sequence (aka. matrix when regular). This is analogous to tensors,
which are $n$D arrays, without the regularity constraint,
i.e. subsequences need not have the same length.

\begin{figure}[t]
  \centering
  \includegraphics[width=0.8\textwidth]{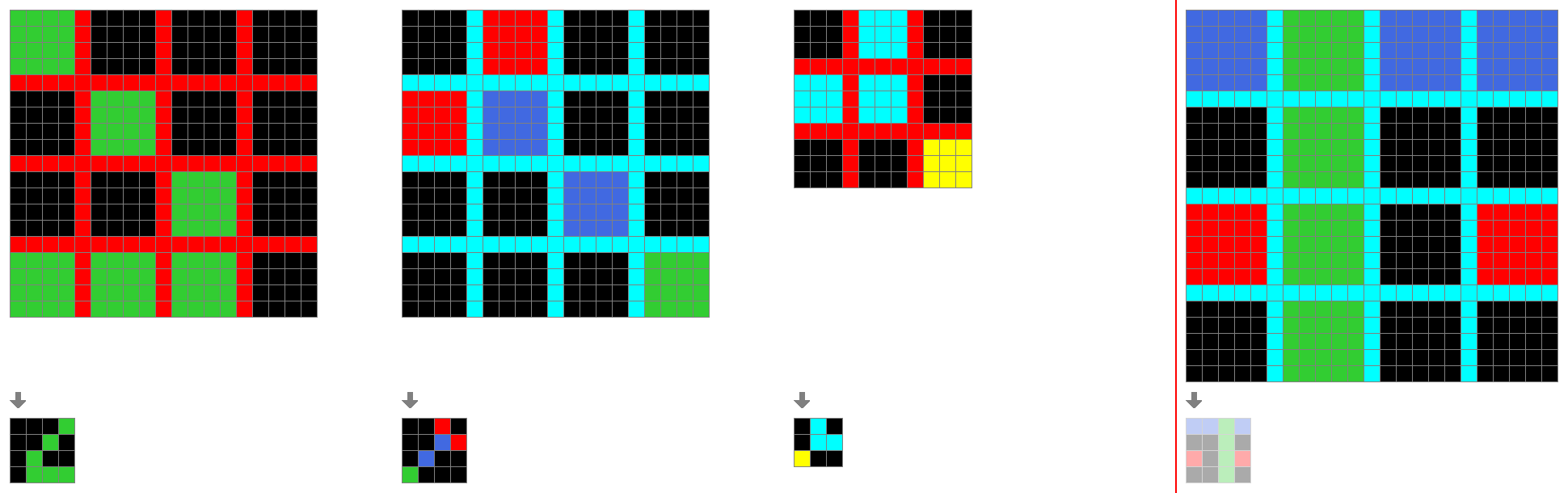}
  \caption{Task 9f236235 (inputs at the top, outputs at the bottom, test on the right)}
  \label{fig:example:seq}
\end{figure}

\begin{arcdef}
  Table~\ref{tab:arc:patterns:seq} lists ARC patterns that involve
  sequences. They add to the patterns listed in
  Table~\ref{tab:arc:patterns}.
  {\bf Objects} decomposes a sprite into a sequence of objects
  according to some segmentation mode (type {\it Seg},
  e.g. 4-connectedness and same color) and to some ordering criteria
  (type {\it Order}, e.g. by decreasing area). There is also a part
  for the size of the sprite as it cannot be deduced from the
  objects. The length of the sequence, here the number of objects~$n$,
  is represented apart from the sequence contents in order to
  facilitate reasoning on the two independently. ${\bf Obj}$ is a pure
  decomposition pattern for objects, separating their position and
  sprite.
  
  {\bf Metagrid} decomposes a sprite (or a grid or a mask) into a
  matrix of subgrids, according to some separating
  frontiers. Figure~\ref{fig:example:seq} shows a task where inputs
  follow this pattern. All subgrids here have the same size but this
  need not be the case. The other parts are the separator color (red
  in the first example), the presence or not of borders represented as
  a 2x2 mask (4 sides, here no borders), the dimensions~$(k,l)$ of the
  matrix of subgrids (4x4 in the first example), the heights and
  widths of subgrids (all subgrids in a metarow have the same weight,
  and all subgrids in a metacolumn have the same width).
  
  {\bf ColorRow}, {\bf ColorCol} and {\bf ColorMat} decompose sprites
  (or grids or masks) into 1D or 2D sequences of colors. The former
  two are only valid for 1D grids. The other way around, {\bf AsGrid}
  converts a 2D sequence of colors, when regular, into a grid. Such
  type conversions are defined because different types provide
  different patterns.
  
  {\bf DomainMap$_C$} reduces a color map to its range colors given a
  fixed set of domain colors~$C$.
  {\bf Range} decomposes an integer sequence that follows an
  arithmetic progression into a start value and a step value. Note
  that there is no part for the stop value or sequence length because
  sequence lengths are encoded by the patterns that introduce the
  sequences.
  
  {\bf Cons$_d$} and {\bf Repeat$_d$} are polymorphic patterns that
  only act on $n$D sequences, independently of the type of their
  contents. They are parametrized by a depth~$d \in [0,n[$ to specify
  the concerned axis. {\bf Cons$_d$} distinguishes the head from the
  tail of every subsequence at depth~$d$, when not empty. For
  instance, {\bf Cons$_1$} decomposes the 2D sequence
  $[[0,1,2],[3,4,5]]$ into the head $[0,3]$ and the tail
  $[[1,2],[4,5]]$, while {\bf Cons$_0$} decomposes it into the head
  $[0,1,2]$ and the tail $[[3,4,5]]$.
  {\bf Repeat$_d$} verifies that all subsequences at depth~$d$ are
  repeats of the same value, which can be compressed to that
  value. For instance, {\bf Repeat$_1$} compresses the 2D sequence
  $[[1,1],[2,2,2]]$ into the 1D sequence $[1,2]$.
\end{arcdef}

There is a difficulty arising from patterns that introduce
sequences. Looking at the task in Figure~\ref{fig:example:seq}, each
input grid matches pattern {\bf Metagrid}, and is therefore decomposed
into a matrix of subgrids. Each subgrid matches pattern {\bf
  Monocolor} but this pattern expects grids, not matrices of grids. We
can resolve this type mismatch by allowing the item-wise application
of patterns on $n$D sequences of the expected type. This means that
the general definition of the pattern is: ${\it Grid}^n$ = {\bf
  Monocolor}(color$:$ ${\it Color}^n$, mask$:$ ${\it Mask}^n$). This
generalization applies to all patterns, including those that introduce
or consume layers of sequences. It also applies to functions.

The application of {\bf Monocolor} on the matrix of subgrids
introduces a matrix of colors and a matrix of masks as parts. From
there, the matrix of colors can be converted into a grid through
pattern {\bf AsGrid}, and the output grid can finally be identified as
the mirror of this grid.

\begin{arcdef}
  In parallel to patterns, new functions are introduced to handle
  sequences of any type: length of a sequence; tail, reversal, and
  rotation of a 1D sequence; transposition and flattening of 2D
  sequences; aggregations (sum, min, max, argmin, argmax, most common,
  least common); and lookup of items and substructures at a few key
  indexes (e.g., first: $x[0]$, last: $x[-1]$, top-left: $x[0,0]$).

  There are also type-specific functions involving sequences: bitwise
  logical operators on sequences of masks; colors of a grid (in
  descending frequency); halves and quadrants of a grid; conversion of
  a sequence or matrix of colors into a grid; matrix of relative
  positions between a sequence of objects (for translation {\em at}
  and translation {\em onto}).
\end{arcdef}

The description length of a $n$D sequence is simply the sum of the
description lengths of its items, under the assumption that items are
independent.  It is the job of sequence patterns to exhibit
dependencies, and use them to better compress the sequence (e.g.,
pattern {\bf Range}). There is no need to encode sequence lengths
because they are all specified as parts of the patterns that introduce
those sequences.

\subsection{Dependent Patterns}
\label{dependent:patterns}

There are common ``patterns'' that are appealing but fit neither
patterns nor expressions in a satisfactory way. For example, in a
number of ARC tasks, the output grid (or a part of it) can be obtained
by cropping the input grid (or a part of it). On one hand, this
suggests a function $subgrid = {\it crop}(grid,position,size)$ but
this requires to generate expressions for many combinations of grids,
positions, and sizes. For tractability reasons, only a few constant
positions and sizes would be considered, possibly missing the correct
values.
On the other hand, a pattern
$subgrid = \textrm{\bf Crop}(grid,position,size)$ could be defined but
the decomposition faces an even higher challenge by having to generate
larger grids containing the subgrid.

Taking the problem upside down, let us look at the tractable
computations. There are two of them.
\begin{align*}
  grid, position, size \leadsto subgrid \\
  grid, subgrid \leadsto position, size
\end{align*}
The first computation is the cropping function, and it is
deterministic. The second computation is non-deterministic because the
subgrid may have several occurrences in the grid. Moreover, the second
computation can be seen as the reverse of the first computation, for
any fixed grid. This suggests a pattern parametrized by the grid,
where the subgrid is the whole and the position and size are the
parts.
\[ subgrid = \textrm{\bf Crop}[grid](pos: {\it Vec},\ size: {\it Vec}) \]
We talk about {\em dependent patterns} by analogy with dependent
types, which are types whose definition depends on a value.
The grid parameter must be known for both composition and
decomposition but it can be the result of a computation. Therefore,
any expression defined on the environment -- including constant values
-- is acceptable. The advantage compared to a function is that the
position and the size can be computed efficiently at parse time, just
by looking for occurrences of the subgrid in the grid.
In practice, {\bf Crop} is introduced in the output model only, and
its grid parameter is any grid variable from the input model. More
complex expressions are not considered as parameters for tractability
reasons.

\begin{table}[t]
\centering
\begin{tabular}{|l|l|}
  \hline
  type & patterns \\
  \hline
  \hline
  {\it Sprite} & {\bf Crop}[{\it Sprite}](pos$:$ {\it Vec},\ size$:$ {\it Size}), \\
       & {\bf Recoloring}[{\it Sprite}](cmap$:$ {\it ColorMap}) \\
  \hline
  $X^n$ & {\bf Index}[$X^{n+k}$](idx$:$ [{\it Int}]) \\
  \hline
\end{tabular}
\caption{Dependent patterns by type}
\label{tab:arc:patterns:dep}
\end{table}

\begin{arcdef}
  Table~\ref{tab:arc:patterns:dep} lists a few dependent patterns for
  ARC. In addition to {\bf Crop} that is described above, we have {\bf
    Recoloring} that compresses a sprite into the recoloring of the
  sprite parameter, when such a recoloring exists of course. {\bf
    Index} locates the whole value, an $n$D sequence, as a
  substructure of the parameter, an $(n+k)$D sequence, and compresses
  it to its index, a 1D sequence of length~$k$. It enables to reason
  on this index location, unlike with the indexing function that uses
  only a few constant indexes.
\end{arcdef}

\subsection{Monte Carlo Tree Search} 
\label{mcts}

As the number of patterns and functions grows there is a higher risk
that greedy search falls into a local minimum. It is desirable to
allow for more exploration, in addition to the exploitation of the MDL
principle, in order to recover from wrong steps on the search path. A
well-established method to balance exploitation and exploration is
Monte Carlo Tree Search (MCTS)~\cite{MCTS2012}. The states are task
models~$M$, the root state is the empty task model~$M_0$, and the
actions are transitions. The value of a state is defined as
$1 - \frac{1}{2}\hat{L}(M,E) \in [0,1]$, giving higher value to the
more compressive models, and value~0 to the root state. We define the
four core steps of MCTS as follows:
\begin{itemize}
\item {\em Selection}: the node to expand is chosen according to the
  UCB1 policy with an exploration constant $c = \sqrt{2}$. when two
  children have the same score, the one with lower DL is prefered,
  i.e. the more promising children are explored first.
\item {\em Expansion}: the selected node is expanded by the top $K_c$
  transitions ($c$ for children), based on the normalized DL of the
  $K_t$ evaluated transitions. $K_c$ is therefore the maximum
  branching factor of the search tree.
\item {\em Rollout}: greedy search (Algorithm~\ref{algo:search}) is
  used for rollout, only one is done as it is deterministic. There
  is no limit in depth, the rollout stops when the description
  length cannot be decreased further.
\item {\em Backpropagation}: the backpropagated value is based on the
  description length of the final task model in the rollout.
\end{itemize}
The search stops when a correct task model is found, or when a time
budget is consumed.

\section{Evaluation}
\label{eval}

We evaluate the MADIL approach on ARC in terms of performance on the
public and private sets of ARC tasks, of search efficiency, and of
sensitivity to the different parameters. We also analyse the failures
and limits of the approach.
Our experiments were run with a single-thread
implementation\footnote{Open source available at
  \url{https://github.com/sebferre/ARC-MDL}} on Fedora~32, Intel Core
i7x12 with 16GB memory.
The learning and prediction logs are available in the GitHub
repository.

\subsection{Performance and Solved Tasks}

\begin{table}[t]
  \centering
  \begin{tabular}{lrrr}
    method & training & evaluation & private \\
    \hline
    \multicolumn{4}{c}{\em DSL + search} \\
    \hline
    MADIL v3.5 & 32.1\% & 15.1\%  & 7.0\% \\ 
    Icecuber (Kaggle'20) & 44.7\% & 30.8\% & 20.6\% \\ 
    M. Hodel (ARCathon'22) & & & 6.0\% \\
    \hline
    \multicolumn{4}{c}{\em language models} \\
    \hline
    MindsAI (ArcPrize'24) & & & 55.5\% \\
    The Architects (ArcPrize'24) & & & 53.5\% \\
    o3 & & 82.8\% & n/a \\
    J. Berman & & 58.5\% & n/a \\
    R. Greenblatt & & 42.0\% & n/a \\
    o1-preview & & 21.0\% & n/a \\
    GPT-4o & & 9.0\% & n/a \\
  \end{tabular}
  \caption{Performance of MADIL and a few methods on the three ARC datasets}
  \label{tab:performance}
\end{table}

Table~\ref{tab:performance} reports the performance of our MADIL
approach on three datasets (first row): the 400 public training tasks,
the 400 public evaluation tasks, and the 100 private tasks used in the
different competitions.  We have only looked at the training tasks for
the design of MADIL, we have never looked at the evaluation tasks,
which therefore constitute a robust evaluation. The performance
measure is micro-accuracy, counting fractional task scores for each
successful test instance. For instance, when one out of two test
instances are correctly predicted, the task score is 0.5.
The reported version of MADIL is v3.5, the last submission at
ArcPrize'24. MADIL's performance is compared to the winning approaches
of successive competitions, as well as to various LLM-based
approaches.\footnote{All figures are from the ArcPrize website at
  \url{https://arcprize.org/}, except for Icecuber on training and
  evaluation datasets for which we used the open source code.}  The
approaches that rely on proprietary LLMs (from o3 to GPT-4o) are not
legible for the private dataset because they require internet access.

On the 400 public evaluation tasks, MADIL finds a solution for 69
tasks, i.e. a task model that is correct on all training examples. Out
of them, 57 are correct on all test examples, hence a generalization
rate at~82\%. There are two more tasks with correct predictions, where
the found model is correct on one training example out of two. This
shows that MADIL can sometimes be robust against a training example
that is not understood.

For recall, two attempts were allowed at ArcPrize'24. However, most
predictions are correct on first attempt. Indeed, for 53/57 solved
tasks, all predictions were correct on first attempt.

\begin{figure}[t]
  \centering
  \begin{minipage}{0.45\textwidth}
    \centering
    \includegraphics[width=\linewidth]{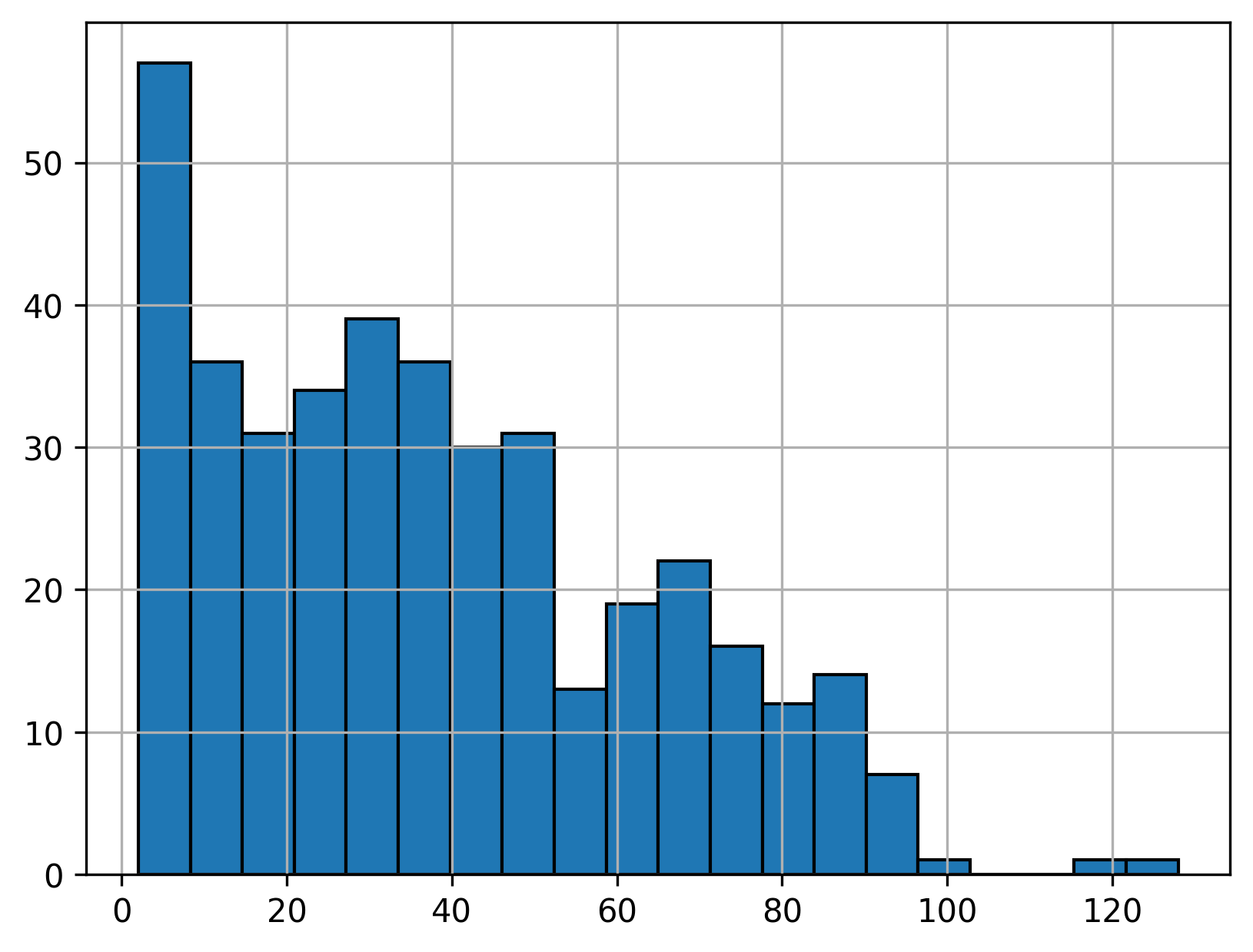}
  \end{minipage}
  \hfill
  \begin{minipage}{0.45\textwidth}
    \centering
    \includegraphics[width=\linewidth]{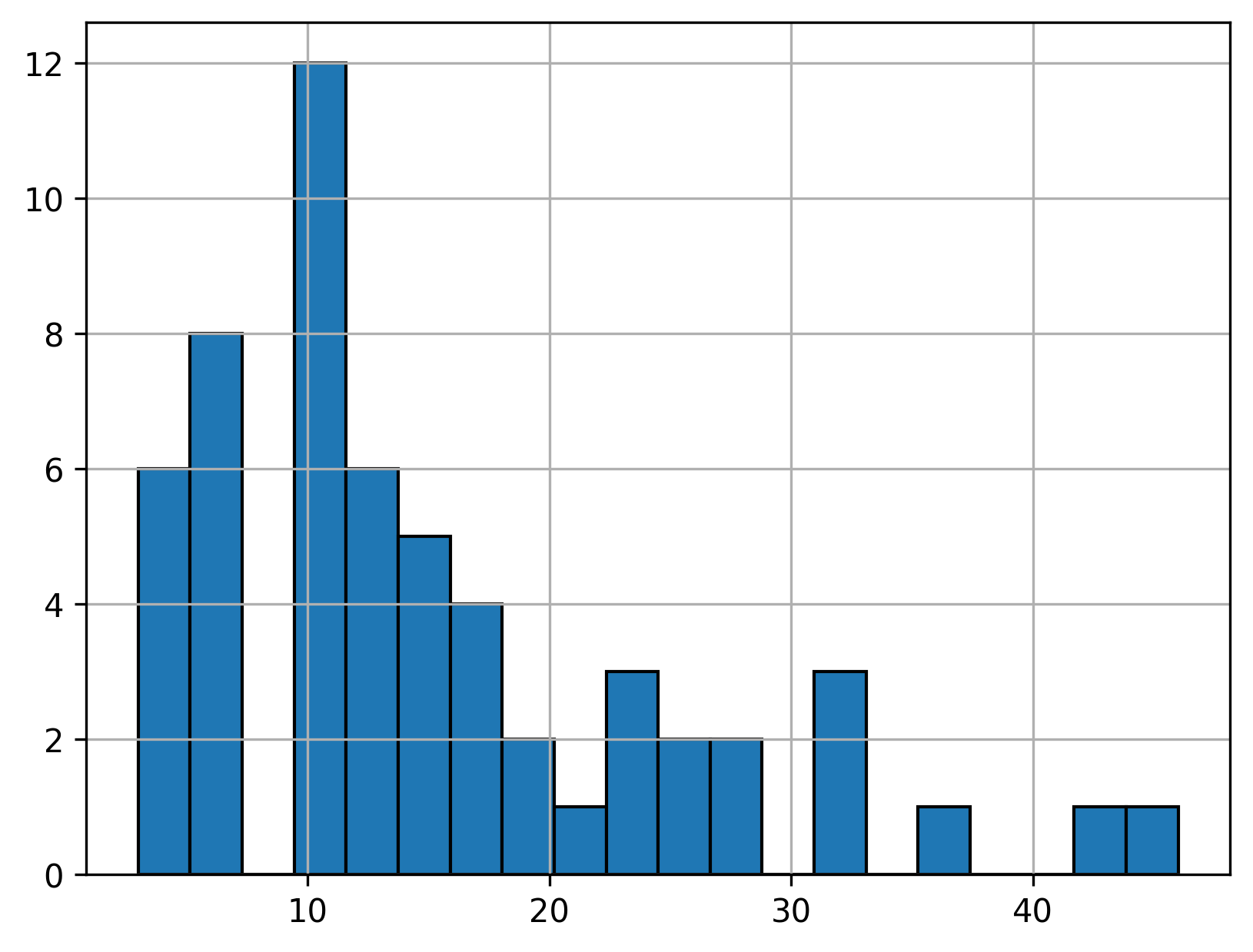}
  \end{minipage}
  \caption{Distribution of model sizes: all evaluation tasks (left), and correct solutions only (right).}
  \label{fig:modelsizes}
\end{figure}

Figure~\ref{fig:modelsizes} shows the distributions of the size of
models found, over all evaluation tasks on the left, and over the 57
correct solutions only on the right. The size of a model is the number
of values, variable references, functions, patterns, and unknowns it
is composed of. The model size for correct solutions ranges from 3 to
46, and is 15.2 on average. This means that a brute-force search on
MADIL's DSL would have to reach depth up to 46 in order to find such
models. This is intractable, existing brute-force search approaches
are limited to depth 3-4. Only 4 tasks have a correct solution with
size less or equal to~4. This demonstrates the relevance of
description lengths to guide the search.
More in detail, Table~\ref{tab:model:usage} shows the usage of the
different patterns and functions in the 400 found models. The most
common grid patterns are {\bf BgColor}, {\bf Objects}, and {\bf
  Monocolor} that together enable to decompose grids into collections
of monocolor objects over some background color, and also {\bf Motif}
that recognizes various geometric regularities in a grid. There are
also the sequence-related patterns {\bf Cons}, and {\bf Repeat}.
Common functions are substraction ({\tt -}) and addition ({\tt +}) to
decrement or increment quantities by small constants, access to the
two components $i$ and $j$ of vectors, the area and pairs of halves of
grids, accessing (e.g., {\it index}, {\it tail}) or aggregating (e.g.,
{\it max}) a sequence. We also see that color constants (COLOR) are
very common.

\begin{table}[t]
  \centering
    {\small
  \begin{tabular}{lr@{\hspace{4em}}lr@{\hspace{4em}}lr}
{\bf Cons} & 1075 & {\bf Metagrid} & 58 & argmax & 13 \\
{\bf Objects} & 659 & {\bf Crop} & 57 & applySymGrid & 13 \\
{\tt -} & 651 & min & 53 & {\bf ColorSeq} & 13 \\
{\bf BgColor} & 624 & colorCount & 47 & top & 11 \\
COLOR & 482 & majorityColor & 46 & reverse & 10 \\
{\bf Monocolor} & 435 & colors & 44 & {\bf Recoloring} & 10 \\
index & 324 & {\tt *} & 43 & not & 8 \\
{\bf Motif} & 281 & {\bf Square} & 42 & rotate & 7 \\
{\bf Empty} & 260 & left & 41 & gridOfColorSeq & 7 \\
{\bf Repeat} & 230 & translatedOnto & 34 & closeSym & 7 \\
cast & 159 & right & 33 & or & 5 \\
i & 149 & mostCommon & 32 & middle & 4 \\
{\tt +} & 128 & maskOfGrid & 32 & flattenByCols & 4 \\
area & 113 & {\bf MakeGrid} & 32 & compose & 4 \\
max & 104 & sum & 31 & ijTranspose & 3 \\
{\tt /} & 100 & {\bf ColorMat} & 31 & xor & 2 \\
j & 99 & relativePos & 30 & unrepeat & 2 \\
halvesH & 85 & direction & 30 & middleCenter & 2 \\
tail & 78 & {\bf Line} & 30 & center & 2 \\
norm & 75 & bottom & 28 & gridOfColorMat & 1 \\
{\bf Full} & 74 & {\bf Point} & 26 & {\bf Swap} & 1 \\
{\bf Index} & 71 & argmin & 18 & {\bf Range} & 1 \\
    halvesV & 62 & transpose & 16 &  \\
  \end{tabular}
  }
  \caption{Usage of patterns (bold), color literals (COLOR), and
    functions in evaluation models.}
  \label{tab:model:usage}
\end{table}

We can compare the correct predictions of MADIL vs Icecuber's approach
because we could rerun it. We only consider predictions as correct
when all test output grids are correctly predicted. In this sense,
MADIL is correct on 59 evaluation tasks whereas Icecuber is correct on
117 evaluation tasks, hence twice more. Beyond those counts, we wanted
to know to which extent the two sets of correct tasks overlap. It
happens that this overlap is important, 43 tasks, but there are still
16 tasks that are solved solely by MADIL, hence showing some
complementarity between the two approaches.

\subsection{Search Efficiency}

\begin{table}[t]
  \centering
  \begin{tabular}{l|r|r|rr}
    & all models & final models & \multicolumn{2}{|c}{best found model} \\
    task set & & & search depth & rank \\
    \hline
    all (400) & 177 & 28.7 & 17 & 1.6  \\
    solutions (69) & 45 & 3.3 & 11 & 1.3 \\
  \end{tabular}
  \caption{Search efficiency measures: number of visited models, number
    of final models (ends of MCTS rollouts), and search depth and
    rank of the best found model. Averaged over two sets of tasks.}
  \label{tab:search:effort}
\end{table}

Search efficiency is the key to scale in expressive power. Indeed, the
search space grows exponentially with the number of DSL
primitives. According to Chollet~\cite{Chollet2019}, intelligence
amounts to the ``efficient acquisition of new skills.''
Table~\ref{tab:search:effort} measures the search effort by counting
the number of models considered during the search. We distinguish
``all models'' that includes intermediate models on the paths from the
initial model to the ``final models'', i.e. the end of MCTS
rollouts. Those ``final models'' represent candidate models for the
task. We also measure for the best found model, its search depth and
its rank among final models.  We average those measures on two
different task sets: the 400 evaluation tasks (first row), and the 69
tasks for which an actual solution was found.
The first row shows that although more than 28 candidate models are
considered over all tasks, the best model is found quickly with an
average rank~1.6. The total number of visited models is 177 on
average, which seems low for a 180s budget. This is explained by the
cost of computing for each model the candidate transitions, and
computing for each refined model its best descriptions and description
lengths in order to select the top-k transitions. The average search
depth is 17, a high value for a search space that grows exponentially
with depth. The maximum search depth is even 43, and a solution is
found at that depth (although it does not generalize to test
instances).
The second row shows a lower search effort, 45 vs 177, because
solutions are found very early at average rank~1.3. Actually, 54/69
solutions are found on the first MCTS rollout, i.e. with greedy
search. This demonstrates the accuracy of description lengths and the
power of the MDL principle. The average search depth is lower at~11
because in the absence of a solution, the system builds more complex
models in an effort to find a solution.

\begin{figure}[t]
  \centering
  \includegraphics[width=0.8\textwidth]{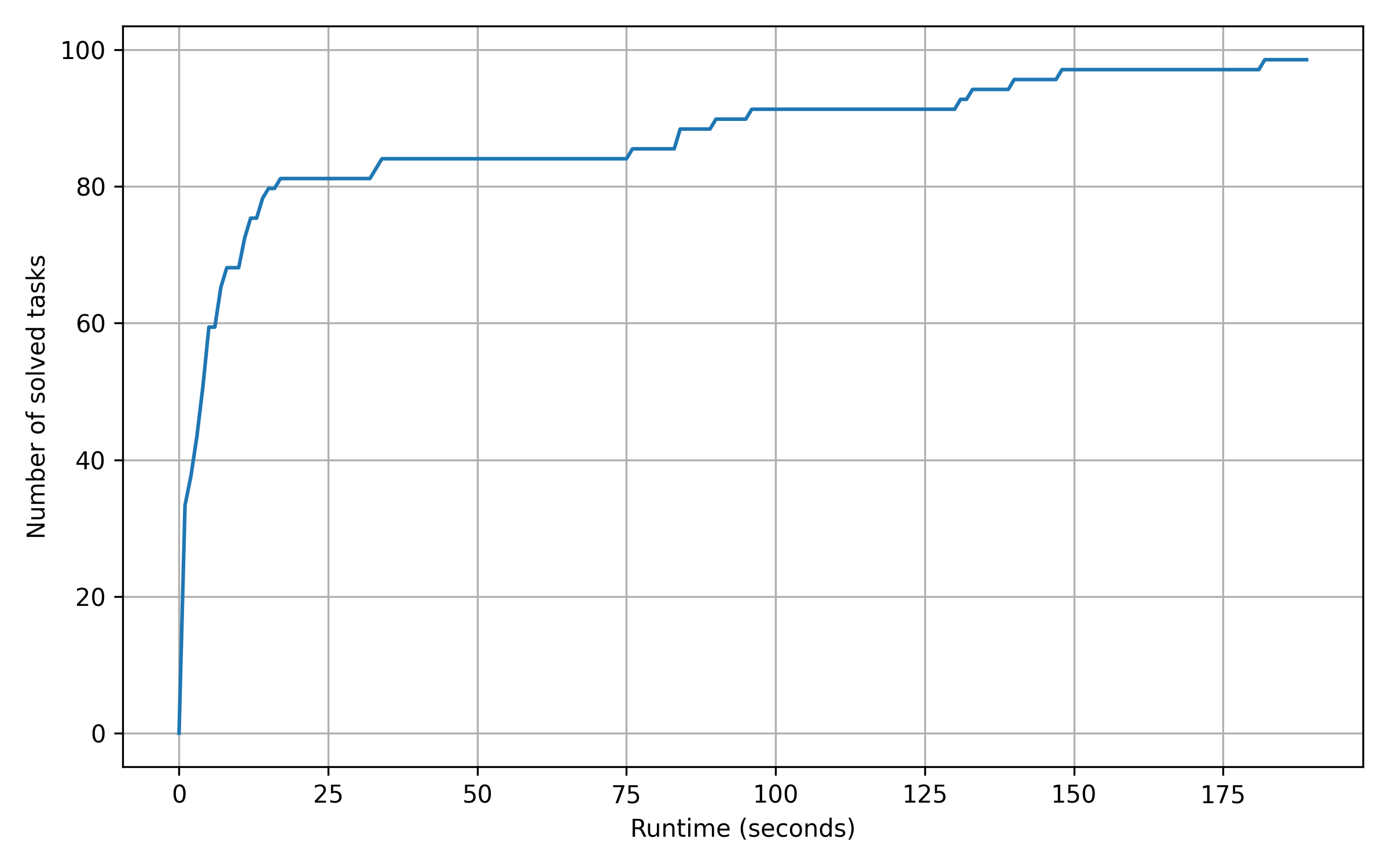}
  \caption{Percentage of solved tasks under a given runtime.}
  \label{fig:search:delay}
\end{figure}

Figure~\ref{fig:search:delay} shows how quickly the 69 task solutions
are found by measuring the percentage of tasks solved, out of 69, as a
function of max runtime. Strikingly, 80\% solutions are found in the
first 20s, hence in about 10\% of the time budget.
By profiling the share of runtime among different functions, we
observe that half of the computation time is spent on computing the
example descriptions for some model (function {\sc describe}, see
Algorithm~\ref{algo:describe}), and the other half is spent on
computing the transitions (function {\sc transitions}, see
Algorithm~\ref{algo:transitions}) for some model. Two third of the
latter is related to expression transitions.

\subsection{Parameter and Ablation Study}


In this section, we perform a parameter study. The objective is to see
how variations in the parameters listed in Table~\ref{tab:params}
affect performance. More precisely, we focus on the 71 evaluation
tasks that are satisfactorily solved\footnote{Correct on at least 2/3
  training examples, and allowing 2 attempts for predictions.} in any
MADIL version from v3.2 to v3.6, and we explore the effect of
decreasing each parameter, i.e. lowering the search effort.  We
distinguish between ``no effect'', ``small effect'' when only a few
tasks are lost, and ``significant effect'' when a significant
proportion of tasks are lost.
We observe significant effect when:
\begin{itemize}
\item Sampling a single grid description ($N_p=1$), or keeping only
  the top example description ($K_p=1$), loosing respectively 19 and 6
  tasks. Sampling multiple descriptions is important to find good
  descriptions, and multiple example descriptions is important to find
  good transitions. However, having $N_p=10$ (vs 100) or $K_p=2$ (vs
  3) only has a small effect (1 task lost).

\item Limiting expression size $S_e=1$, loosing 27 tasks. Complex
  expressions hence appear important. However, there is no task loss
  in our task sample for $S_e=5$ (vs 9), and lowering this
  parameter saves time and allows for more search in given budget.

\item Using only the most promising transition ($K_t=1$) or using
  greedy search, loosing respectively 3 and 4 tasks. This shows that,
  despite greedy search works in most cases, search remains useful to
  recover from local minima.

\item Not using the rehearsal factor for computing description lengths
  ($\alpha=1$), loosing 6 tasks. However, there is no effect with
  $\alpha=10$ (vs 100).

\item Using neither input test check nor pruning, loosing 6 tasks.
\end{itemize}
Overall, the whole method appears very robust because strong effects
only occur for $N_p=1$ and $S_e=1$, and are moderate to null in all
other cases. The only positive effect was found for $N_e \in [30,100]$
(vs 1000), earning 1 task. This is explained by the cost of
enumerating candidate expressions that here gets converted in more
search.

\subsection{Failed Tasks and Limits}

We study the limits of our approach by analyzing failed tasks. There
three kinds of failures: (a) solution was found but it does not
generalize, (b) a solution seems to exist but it was not found, (c)
there is no solution in the current implementation.

\paragraph{Generalization failures}
We here look at the 21 training tasks for which a solution was found
but it is incorrect on some or all test instances. The major cause is
spurious expressions that manage to compensate for the wrong
decompositions of grids. The risk to find spurious expressions rises
when there are fewer training examples, and when allowing for larger
expressions (parameter~$S_e$). On inspection, they look really
contrived and artificial so that, according to the MDL principle, they
should be strongly penalized. This suggests revising the description
lengths of expressions, and also the balance between the model and the
data (parameter~$\alpha$).

\paragraph{Search failures}
We here look at 62 training tasks for which no solution was found but
that seem solvable in the current implementation.
The most common cause (21 tasks) is that non-compressive transitions
are sometimes necessary in order to reach a solution, which does
compress more in the end. This suggests two improvements. First, one
could allow for non-compressive transitions, at least in a controlled
way like the {\em late acceptance hill-climbing
  heuristics}~\cite{Burke2017}. Second, it may be that some
description lengths are poorly designed, overestimating the
information contents of some descriptions. It is in particular the
case with object segmentation where connectivity is not sufficiently
taken into account.

Another common cause (15 tasks) is that the top-$K_p$ descriptions are
not diverse enough so that key transitions are missed. Indeed, recall
that candidate transitions are derived from the observation of those
descriptions. There is a combinatorial problem with the computation of
descriptions. Considering a model that describes a grid as a set of
objects over some background color, without additional constraints,
there is a description for each background color, each segmentation
mode, each object ordering, etc. Description lengths help to choose
the more relevant descriptions but those choices may be inconsistent
from one example to another. This suggests the following
improvements. First, the value of $K_p$ could be increased but this
has a cost, and this is a weak solution in face of
combinatorics. Second, one could accept candidate transitions that are
derived from only a subset of the training examples, although this may
add a lot of extra-cost to the computation of transitions. Third, the
combinatorial aspect could be broken by modifying the parsing process
to compute sets of values for each variable of the model, instead of
sets of descriptions, which are mappings from variables to values.

Other causes for search failures (13 tasks) are poor rankings of
candidate transitions that put the solution far away, spurious
expressions that attracts search in the wrong subspace, or
computation-intensive steps that slow down search.

\paragraph{Missing primitives}
There are still many missing primitives to cover the core knowledge
priors assumed by ARC:
\begin{itemize}
\item non-centered motifs, comparability of rotated motif cores, and
  complex compositions of different primitive motifs;
\item shapes that are best understood as drawing algorithms, like
  spiral or staircase shapes;
\item more robust object segmentation, especially w.r.t. overlapping,
  disconnected objects, and exploiting object similarities and
  differences across examples, e.g. prefering identical shapes when
  possible;
\item topological relationships between objects, like ``adjacent to'',
  ``on top of'' or ``opposite'', beyond the current vector-based
  relative positions;
\item support for grid orientation invariance;
\item piecewise decomposition of an output (sub)grid as a stack of
  pieces, this is a key ingredient of Icecuber's approach;
\end{itemize}
The current DSL is also limited in its handling of sequences. It is
often necessary to partition a sequence into two subsequences
according to some pattern, but there is a chicken-and-egg problem to
find the partition and the pattern. Another limit is when an output
sequence is not just a mapping of an input sequence but some items
have to be added or removed.
Finally, we avoided almost completely n-ary expressions,
i.e. expressions that use several variables, in order to avoid a
combinatorial explosion in the enumeration for expressions. This is
partly compensated by relational matrices over sequences. Defining
more dependent patterns like $P[x](?y)$, in place of binary functions
like $f(x,y)$, could help in this issue.

\paragraph{Limits of the current approach}
There are also limits that do not seem easy or even possible to
compensate by adding primitives to the DSL.
A first limit is that patterns are parsed on one grid at a time, hence
missing the global picture. It would be beneficial, and sometimes
necessary, to parse all input grids or all output grids together. For
instance, this would help to choose a segmentation mode or motif in a
consistent way across examples. It would also be beneficial to have
patterns over input and output grids together rather than
separately. For instance, this would enable to identify what stays and
what changes, hence guiding the segmentation process.

Another limit is the absence of key constructs such as conditionals
and recursion. They are not the majority but some tasks rely on
them. There is earlier work on program synthesis that could serve as a
basis~\cite{FlashFill2013,Mulleners2023,Rule2024}.

A last limit we give here -- there are certainly others -- is the fact
that candidate expression transitions are retrieved by value in a DAG
of expressions, hence relying on a simple equality between expression
results and the expected value. It would be much more powerful to have
a kind of Content-Based Information Retrieval (CBIR) system. For
example, a grid query could return grids that contain it or that are a
symmetry of it; a sequence query could return sequences that include
it, or that are simple transformations of it (e.g., dedupe,
reverse).

\section{Conclusion and Perspectives}
\label{conclu}

We have introduced and described in detail MADIL, a novel approach to
program synthesis that is based on descriptive models and on the
Minimum Description Length (MDL) principle. In this setting, a program
maps an input to an output by decomposing the input with an input
model, and then by composing the output with an output model that is
fed the input decomposition. Learning a program from input-output
examples follows the MDL principle, searching for a pair of models
that best compress the examples.
Applying MADIL to some domain, i.e. some family of tasks, mostly
amounts to define a collection of patterns, where each pattern
supports the decomposition and composition of some values. In this
paper, we focus on the grid-to-grid tasks of the Abstraction and
Reasoning Corpus (ARC).
The main advantage of our approach compared to other approaches is the
efficiency of the MDL-based search. Compared to brute-force search, it
enables deep and narrow search rather than shallow and broad
search. This allows for more low-level primitives, and this scales
better with the number of primitives. Compared to LLM-based approach,
it is much more frugal, it does not rely on heavy generate-and-test,
and its predictions are interpretable.

Future work will focus on identifying and addressing the limits of
MADIL, using ARC as a stimulating benchmark. Adding more primitives
(patterns and functions) is unlikely to solve ARC, or may result in
ad-hoc solutions that do generalize well to other domains. Among the
identified limits, there are: the combinatorial complexity of
decompositions, increasing with model size; the fact that the input
and output models are learned separately, hence missing insights from
commonalities between inputs and outputs; the lack of conditional
models to distinguish differents cases in some tasks; the difficulty
to match input part sequences to output part sequences, especially
when there is no obvious ordering; taking into account constraints
between different parts.
At another level, an important research problem is the learning of the
MADIL primitives from a collection of training tasks, relying on a
general programming language of some sort to define the primitives.





\FloatBarrier

\end{document}